\pgfplotsset{compat=1.18}
\definecolor{customBlack}{HTML}{000000}
\definecolor{customNavy}{HTML}{14213D}
\definecolor{customOrange}{HTML}{FCA311}
\definecolor{customGray}{HTML}{E5E5E5}
\definecolor{customWhite}{HTML}{FFFFFF}
\definecolor{custompink}{RGB}{181,76,83} 
\definecolor{customgray}{RGB}{211,211,211}
\definecolor{customblue}{RGB}{155,221,255}
\definecolor{customcharcol}{RGB}{63,62,62}
\definecolor{customgold}{RGB}{187,170,126}
\definecolor{customdarkgray}{RGB}{119,118,118}
\definecolor{customolive}{RGB}{99,142,61}
\definecolor{custommediumgray}{RGB}{235,235,235}
\definecolor{customblack}{RGB}{65,65,65}
\definecolor{surf}{RGB}{215,215,215}
\definecolor{plane}{RGB}{197,197,207}
\definecolor{linkred}{RGB}{233,86,94}
\definecolor{linkblue}{RGB}{0,37,167}
\theoremstyle{plain}
\newtheorem{theorem}{Theorem}[section]
\newtheorem{proposition}[theorem]{Proposition}
\theoremstyle{definition}
\theoremstyle{remark}
\newtheorem{remark}[theorem]{Remark}
\newcommand{\blind}{0}
\newcommand{\x}{\tilde{x}}
\begin{document}

\def\spacingset#1{\renewcommand{\baselinestretch}%
{#1}\small\normalsize} \spacingset{1}

\if0\blind
{
  \title{\bf GeoERM: Geometry‑Aware Multi‑Task Representation Learning on Riemannian Manifolds}
  \date{}
  \author{Aoran Chen\\
    Department of Biostatistics, New York University \\
    and \\
    Yang Feng\\
   Department of Biostatistics, New York University}

  \maketitle
} \fi

\if1\blind
{
  \bigskip
  \bigskip
  \bigskip
  \begin{center}
    {\LARGE\bf Title}
\end{center}
  \medskip
} \fi

\bigskip
\begin{abstract}
Multi‑Task Learning (MTL) seeks to boost statistical power and learning efficiency by discovering structure shared across related tasks.  State‑of‑the‑art MTL representation methods, however, usually treat the latent representation matrix as a point in ordinary Euclidean space, ignoring its often non‑Euclidean geometry, thus sacrificing robustness when tasks are heterogeneous or even adversarial. We propose \emph{GeoERM}, a geometry‑aware MTL framework that embeds the shared representation on its natural Riemannian manifold and optimizes it via explicit manifold operations.  Each training cycle performs (i) a Riemannian gradient step that respects the intrinsic curvature of the search space, followed by (ii) an efficient polar retraction to remain on the manifold, guaranteeing geometric fidelity at every iteration.  The procedure applies to a broad class of matrix‑factorized MTL models and retains the same per‑iteration cost as Euclidean baselines. Across a set of synthetic experiments with task heterogeneity and on a wearable‑sensor activity‑recognition benchmark, GeoERM consistently improves estimation accuracy, reduces negative transfer, and remains stable under adversarial label noise, outperforming leading MTL and single‑task alternatives.
\end{abstract}

\noindent
{\it Keywords:}  Multi-task learning, Manifold learning, Representation learning, Riemannian optimization
\vfill

\newpage
\spacingset{1.75} 

\section{Introduction}
Consider a high-resolution biomedical image, such as a 512 $\times$ 512 scan used for diagnostic purposes. Within this vast array of pixels, only certain low-dimensional representations capture meaningful information, as evidenced by the success of Convolutional Neural Networks (CNNs). Indeed, detecting subtle tissue abnormalities, inferring underlying pathologies, and predicting treatment responses rely on extracting meaningful patterns from complex, high-dimensional data.

Representation Learning (RL) addresses this problem by mapping raw inputs into low-dimensional embeddings that reveal the data’s most informative characteristics \citep{rostami2022transfer}. Over the past decade, RL has catalyzed advancements in fields such as computer vision, multilingual knowledge graph completion, and reinforcement learning \citep{gupta2017learning, chen2020multilingual}. Nevertheless, pre-trained embeddings—while generally effective—often prove unreliable when faced with limited or heterogeneous data, as well as in settings burdened by outliers \citep{qiao2017learning, qiao2018outliers, wang2018learning, raghu2019transfusion}.

Multi-Task Learning (MTL) extends these ideas by simultaneously learning representations for multiple, possibly related, tasks \citep{zhang2021survey}. By jointly modeling multiple objectives, MTL exploits shared structures and relationships between tasks to enhance each individual task’s performance \citep{baxter2000model, maurer2016benefit, du2020few, tripuraneni2020theory, tripuraneni2020provable, tian2023learning}. It refines the principles of RL by integrating structural assumptions that guide the learning process more effectively, offering a broader perspective that a single-task approach cannot match \citep{thekumparampil2021statistically, rostami2022transfer}. For example, models pre-trained on ImageNet and later adapted to specialized medical imaging tasks illustrate MTL’s value: knowledge acquired from general visual domains can boost performance in specific clinical applications \citep{denevi2020advantage, zhou2021multi, deng2022learning}.

Current MTL frameworks commonly leverage low-dimensional embeddings, employing structural constraints—such as sparse or group-sparse penalties—to extract shared patterns \citep{xu2021multitask, li2023targeting}. However, these approaches typically treat learned representations as parameters in high-dimensional Euclidean spaces, neglecting geometric properties inherent to orthogonality and manifold structures \citep{bastani2021predicting, li2022transfer, gu2022robust, duan2023adaptive, gu2023commute, tian2023learning, zheng2023sofari}. Overlooking these underlying geometries limits stability and adaptivity, particularly in heterogeneous or adversarial scenarios.

In this paper, we propose a geometry-aware MTL framework that learns low-dimensional representations shared across multiple tasks while explicitly accounting for the intrinsic manifold structure. By embedding orthogonality and manifold constraints directly into the learning process, rather than imposing them post hoc, our method achieves improved stability and robustness, even under challenging situations.

\paragraph{MTL framework}
Consider a multi-task learning setting with \( T \) prediction tasks. For each task \( t \in [T] \), we observe data \(\{(\boldsymbol{X}_i^{(t)}, y_i^{(t)})\}_{i=1}^n\), with predictors \(\boldsymbol{X}_i^{(t)} \in \mathbb{R}^p\) and responses \(y_i^{(t)}\). Each response is modeled as
\(y_i^{(t)} \sim P\bigl(y \mid \boldsymbol{X}_i^{(t)}; \boldsymbol{\beta}^{(t)*}\bigr),\)
where \(P(y \mid \boldsymbol{X}; \boldsymbol{\beta})\) denotes a task-specific predictive distribution parameterized by \(\boldsymbol{\beta} \in \mathbb{R}^p\).

In MTL, identifying a shared, low-dimensional structure that captures inter-task relationships is natural. A common approach assumes each task-specific parameter \(\boldsymbol{\beta}^{(t)*}\) lies within a subspace spanned by a small number of latent factors \citep{baxter2000model, maurer2016benefit, du2020few, tripuraneni2020theory, tripuraneni2020provable, thekumparampil2021statistically}. This commonly leads to a factorization:
\(\boldsymbol{\beta}^{(t)*} = \boldsymbol{A}^{(t)*}\boldsymbol{\theta}^{(t)*},\)
where \(\boldsymbol{\theta}^{(t)*} \in \mathbb{R}^r\) is a low-dimensional coefficient vector, and \(\boldsymbol{A}^{(t)*} \in \mathbb{R}^{p \times r}\) is a representation matrix encoding the alignment of task parameters within a shared subspace. Such factorizations enhance computational efficiency, regularization, and interpretability, leveraging the inherent structural benefits of multi-task learning \citep{deng2022learning, denevi2020advantage, zhou2021multi, xu2021multitask, li2023targeting}.

Despite these advantages, previous methods typically treat \(\boldsymbol{A}^{(t)*}\) as a parameter in Euclidean space, enforcing orthogonality or low-rank constraints post hoc. In truth, assuming \(\boldsymbol{A}^{(t)*}\) is orthonormal—i.e., \(\boldsymbol{A}^{(t)*^{\top}}\boldsymbol{A}^{(t)*} = \boldsymbol{I}_r\)—naturally places these representations on the Stiefel manifold. Such orthonormality is non-trivial since it embeds \(\boldsymbol{A}^{(t)*}\) within a curved geometric space. Neglecting this geometric structure and imposing constraints as an afterthought can produce suboptimal solutions.

\paragraph{Our Contribution: Incorporating Geometric Structure into MTL}

These shortcomings indicate a need to integrate additional structure directly into the optimization objective. We build upon established MTL formulations that factor each task’s parameter vector \(\boldsymbol{\beta}^{(t)*}\) into a low-dimensional component \(\boldsymbol{\theta}^{(t)*}\) and an orthonormal matrix \(\boldsymbol{A}^{(t)*}\). Unlike previous methods that treat these representation matrices as parameters in Euclidean space, we explicitly constrain each \(\boldsymbol{A}^{(t)*}\) to reside on the Stiefel manifold. By respecting the underlying geometric structure, we operate directly within the manifold framework, resulting in a more coherent and principled approach.

We thus ask: \emph{Can exploiting the intrinsic geometric structure of task representations improve parameter estimation while also producing more stable and robust multi-task learning?} The answer is \emph{yes}. 

To achieve this, we introduce \emph{GeoERM}, a new Geometric Empirical Risk Minimization framework that employs Riemannian optimization to operate directly on the Stiefel manifold. By explicitly incorporating manifold geometry, GeoERM accurately captures cross-task relationships and improves parameter estimation. It also enhances robustness and generalization under heterogeneous or adversarial conditions, protecting performance where traditional methods stumble.

\paragraph{Roadmap}  
The remainder of this paper is organized as follows. Section~\ref{method section} formalizes the problem setup and outlines our core approach, which leads to our main GeoERM algorithm (Section~\ref{Main Algorithm GeoERM}). Section~\ref{Numerical Experiments} evaluates GeoERM through numerical experiments, first comparing it with baseline methods (Section~\ref{Models for Comparison}), then assessing its performance on simulated (Section~\ref{Simulation}) and real-world data (Section~\ref{Realdata}). Finally, Section~\ref{discussion} discusses broader implications and future research directions.

\section{Geometric Multi-task Learning}\label{method section}
In this section, we develop a geometry-aware framework for multi-task learning (MTL). After formalizing the problem and introducing a decomposition that captures shared structure and outliers, we present the key geometric tools—Riemannian gradient via orthogonal projection (Section~\ref{Riemannian Gradient Computation via Orthogonal Projection}) and polar retraction (Section~\ref{Polar Retraction on the Stiefel Manifold}). Section~\ref{sec:workflow-phases} describes the optimization workflow, and Section~\ref{Main Algorithm GeoERM} integrates these components into the full GeoERM algorithm. 

\subsection{Problem Setup}
We consider a multi-task learning (MTL) scenario involving \( T \) supervised learning tasks. Each task \( t \in [T] \) is associated with \( n \) observed data points \(\{(\boldsymbol{X}_i^{(t)}, y_i^{(t)})\}_{i=1}^n\), where \(\boldsymbol{X}_i^{(t)} \in \mathbb{R}^p\) and \(y_i^{(t)} \in \mathbb{R}\). While most tasks share common underlying structures, some may deviate substantially from these patterns, effectively behaving as outliers.

\paragraph{Predictive Model with Task-Specific Structure}  
For each task, we model the conditional distribution of the response as  
\(
y_i^{(t)} \sim P\bigl(y \mid \boldsymbol{X}_i^{(t)}; \boldsymbol{\beta}^{(t)}\bigr),
\)
where \( P(y \mid \boldsymbol{X}; \boldsymbol{\beta}) \) is parameterized by a task-specific coefficient vector \( \boldsymbol{\beta} \in \mathbb{R}^p \). For regression tasks, we assume a standard linear model  
\(
y_i^{(t)} = \langle \boldsymbol{X}_i^{(t)}, \boldsymbol{\beta}^{(t)} \rangle + \epsilon_i^{(t)},
\)
where \( \epsilon_i^{(t)} \) is independent, zero-mean sub-Gaussian noise with variance \( \sigma^2 \). For binary classification tasks, we adopt a logistic regression model 
\(
P(y_i^{(t)} = 1 \mid \boldsymbol{X}_i^{(t)}; \boldsymbol{\beta}^{(t)}) = \frac{1}{1 + e^{-\langle \boldsymbol{X}_i^{(t)}, \boldsymbol{\beta}^{(t)} \rangle}}.
\)

\paragraph{Task Parameter Decomposition and Outlier Modeling}  
We consider a multi-task learning scenario where the division of tasks into normal and outlier subsets is unknown a priori. To balance shared structure and individual task variability, we assume that normal tasks follow a common low-dimensional structure, while outlier tasks deviate arbitrarily. 

For normal tasks \( t \in S \subseteq [T] \), we decompose task parameters as 
\( \boldsymbol{\beta}^{(t)} = \boldsymbol{A}^{(t)} \boldsymbol{\theta}^{(t)} \), where \( \boldsymbol{A}^{(t)} \in \operatorname{St}(p, r) = \{ \boldsymbol{A} \in \mathbb{R}^{p \times r} : \boldsymbol{A}^\top \boldsymbol{A} = \boldsymbol{I}_r \} \) is an orthonormal representation matrix, and \( \boldsymbol{\theta}^{(t)} \in \mathbb{R}^r \) is a low-dimensional parameter vector.

For outlier tasks \( t \in S^c \), task parameters may take arbitrary values: 
\( \boldsymbol{\beta}^{(t)} = \boldsymbol{\beta}_{\text{outlier}}^{(t)} \). 
This distinction prevents outlier tasks from distorting the learned manifold-based structure.

\paragraph{Objective Function}  
We minimize the average loss across tasks and introduce a penalty that anchors each representation matrix \( \boldsymbol{A}^{(t)} \) to a shared center representation \( \overline{\boldsymbol{A}} \) on the Stiefel manifold:  
\begin{equation}\label{Objective}
    \frac{1}{T}\sum_{t=1}^T f^{(t)}\bigl(\boldsymbol{A}^{(t)}\boldsymbol{\theta}^{(t)}\bigr) + \frac{\lambda}{\sqrt{n}} \bigl\|\boldsymbol{A}^{(t)}(\boldsymbol{A}^{(t)})^\top - \overline{\boldsymbol{A}}(\overline{\boldsymbol{A}})^\top\bigr\|_2,
\end{equation}
where \( f^{(t)}: \mathbb{R}^p \to \mathbb{R} \) is the task-specific loss function, which is given as follows:

\begin{itemize}
    \item Linear regression: \(
f^{(t)}\bigl(\boldsymbol{A}^{(t)}\boldsymbol{\theta}^{(t)}\bigr) = \frac{1}{2n} \bigl\|\boldsymbol{Y}^{(t)} - \boldsymbol{X}^{(t)} \boldsymbol{A}^{(t)}\boldsymbol{\theta}^{(t)}\bigr\|_2^2 = \frac{1}{2n} \sum_{i=1}^n \Bigl(y_i^{(t)} - \boldsymbol{X}_i^{(t)\top}\boldsymbol{A}^{(t)}\boldsymbol{\theta}^{(t)}\Bigr)^2.\) 
    \item Logistic regression: \(
f^{(t)}\bigl(\boldsymbol{A}^{(t)}\boldsymbol{\theta}^{(t)}\bigr) = \frac{1}{n} \sum_{i=1}^n \Bigl( - y_i^{(t)} \boldsymbol{X}_i^{(t)\top} \boldsymbol{A}^{(t)}\boldsymbol{\theta}^{(t)} + \log \big(1 + e^{\boldsymbol{X}_i^{(t)\top} \boldsymbol{A}^{(t)}\boldsymbol{\theta}^{(t)}} \big) \Bigr).
\) 
\end{itemize}

Here, \( \boldsymbol{X}^{(t)} \in \mathbb{R}^{n \times p} \) and \( \boldsymbol{Y}^{(t)} \in \mathbb{R}^n \) represent the feature matrix and response vector for task \( t \), respectively, with the logistic regression loss following the standard negative log-likelihood formulation.

\subsection{Geometric Optimization on the Stiefel Manifold}\label{Geometric Optimization on the Stiefel Manifold}

From our problem formulation, each normal task parameter vector \( \boldsymbol{\beta}^{(t)} \), for \( t \in S \), follows the decomposition  \(\boldsymbol{\beta}^{(t)} = \boldsymbol{A}^{(t)}\boldsymbol{\theta}^{(t)}\), where \(\boldsymbol{A}^{(t)} \in \mathbb{R}^{p \times r}\) and \(\boldsymbol{\theta}^{(t)} \in \mathbb{R}^r\). The constraint
\(
\boldsymbol{A}^{(t)\top}\boldsymbol{A}^{(t)} = \boldsymbol{I}_r
\)
places \( \boldsymbol{A}^{(t)} \) on the Stiefel manifold \( \operatorname{St}(p, r) \), governing the structure of task representations. This geometric structure captures essential relationships among tasks but also demands specialized methods, as standard Euclidean optimization cannot directly handle manifold constraints.

\paragraph{Challenges in Manifold-Constrained Optimization}  
Applying a standard Euclidean gradient descent update,
\(
\boldsymbol{A}_{k+1}^{(t)} = \boldsymbol{A}_k^{(t)} - \alpha \nabla_{\boldsymbol{A}_k^{(t)}} f,
\)
does not preserve orthonormality. To avoid confusion, we denote the standard Euclidean gradient by \(\nabla\) and the Riemannian gradient by \(\tilde{\nabla}\). Even if \(\boldsymbol{A}_k^{(t)}\) satisfies \(\boldsymbol{A}_k^{(t)\top}\boldsymbol{A}_k^{(t)} = \boldsymbol{I}_r\), the iterate \(\boldsymbol{A}_{k+1}^{(t)}\) typically will not. This drift off the manifold weakens the geometric structure central to our MTL framework.

Addressing this issue requires specialized optimization on the Stiefel manifold so each update remains on the manifold, preserving orthogonality and geometric consistency for robust multi-task learning.

\paragraph{Why Simple Orthogonalization Falls Short
}
Let \(f: \operatorname{St}(p,r) \to \mathbb{R}\) be a continuously differentiable cost function defined on the Stiefel manifold \(\operatorname{St}(p,r)\). Under standard conditions—smoothness, Lipschitz continuity of \(\tilde{\nabla} f\), and a sufficiently small step size \(\alpha>0\)—Riemannian gradient descent on \(\operatorname{St}(p,r)\) satisfies a descent-type inequality analogous to the Euclidean setting. Theorem 4.3.1 in \citet{absil2008optimization} states that for some constant \(c>0\):
\(
f(x_{k+1}) \leq f(x_k) - c \alpha \|\tilde{\nabla} f(x_k)\|^2.
\), where \(\{x_k\} \subset \operatorname{St}(p,r)\) denotes iterates from Riemannian gradient descent. The term \(\tilde{\nabla} f(x_k)\) represents the Riemannian gradient of \(f\) at \(x_k\), which lies in the tangent space \(T_{x_k} \operatorname{St}(p,r)\). Consequently, \(\{f(x_k)\}\) strictly decreases, and \(\|\tilde{\nabla} f(x_k)\|\) converges to zero. Thus, every accumulation point (that is, the limit of any convergent subsequence of iterates) must be first-order stationary, i.e., \(\tilde{\nabla} f = 0\). In simpler terms, once the gradient vanishes at a limit point, no further local decrease in \(f\) is possible.

In contrast, simply taking a Euclidean step and followed by orthogonalization gives \( x_{k+1}' = \Pi_{\operatorname{St}(p,r)}(x_k' - \alpha \nabla f(x_k')) \). We denote these naive iterates by \( x_k' \), distinct from the iterates \( x_k \) obtained via Riemannian gradient descent. Although \( x_k' \in \operatorname{St}(p,r) \) , the update procedure does not guarantee that \( x_{k+1}' - x_k' \) aligns with the tangent direction \( -\alpha \tilde{\nabla} f(x_k') \). The projection \( \Pi_{\operatorname{St}(p,r)} \) is a non-differentiable operator rather than a smooth retraction, potentially introducing arbitrary rotations or folds not generated by a proper tangent vector and retraction step. Consequently, this naive approach fails to satisfy Riemannian convergence conditions, emphasizing the need for a principled, geometry-aware optimization framework.

\subsubsection{Proposed Approach: Geometry-Aware Optimization}  
Let \(\mathcal{M} \coloneqq \operatorname{St}(p, r)\) be the Stiefel manifold. Our geometry-aware procedure enforces orthogonality constraints by operating directly on \(\mathcal{M}\). We denote by \(T_{\boldsymbol{A}^{(t)}} \mathcal{M}\) the tangent space of \(\mathcal{M}\) at \(\boldsymbol{A}^{(t)}\). 
The method proceeds in two key steps: computing the Riemannian gradient and applying a proper retraction operator.

\paragraph{Step 1: Riemannian Gradient}  
Starting with the Euclidean gradient \(\nabla_{\boldsymbol{A}^{(t)}} \bar{f}\), where \(\bar{f}\) extends the original objective to \(\mathbb{R}^{p \times r}\), we obtain the Riemannian gradient \(\tilde{\nabla}_{\boldsymbol{A}^{(t)}} f\) by projecting onto the tangent space:  
\( \tilde{\nabla}_{\boldsymbol{A}^{(t)}} f = \mathcal{P}_{T_{\boldsymbol{A}^{(t)}} \mathcal{M}}(\nabla_{\boldsymbol{A}^{(t)}} \bar{f}) \),  
where the projection operator 
\( \mathcal{P}_{T_{\boldsymbol{A}^{(t)}} \mathcal{M}}(\boldsymbol{G}) = \boldsymbol{G} - \boldsymbol{A}^{(t)}\operatorname{sym}((\boldsymbol{A}^{(t)})^\top\boldsymbol{G}) \), where \( \operatorname{sym}(\boldsymbol{X}) \) denotes the symmetrization operator, defined as  
\(\operatorname{sym}(\boldsymbol{X}) \coloneqq \tfrac{1}{2}(\boldsymbol{X} + \boldsymbol{X}^\top).\)  This projection ensures that the descent direction respects the manifold’s geometry.

\paragraph{Step 2: Retraction Operator}  
After taking a step along the Riemannian gradient, we apply the polar retraction to map the result back onto the manifold:  
\( \mathcal{R}_{\boldsymbol{A}^{(t)}}(\boldsymbol{H}) = (\boldsymbol{A}^{(t)} + \boldsymbol{H})(\boldsymbol{I}_r + \boldsymbol{H}^\top\boldsymbol{H})^{-1/2} \),  
where \( \boldsymbol{H} = -\alpha \tilde{\nabla}_{\boldsymbol{A}^{(t)}} f \). This operation ensures that the updated representation \( \boldsymbol{A}^{(t)} \) remains in \( \operatorname{St}(p, r) \).

By combining the Riemannian gradient projection with an appropriate retraction operator, our geometry-aware optimization approach preserves the intrinsic manifold structure at every iteration. We illustrate the two-step optimization process in Figure \ref{fig:riemannian_optimization}.

\begin{figure}[H]
\centering
\begin{tikzpicture}
    \begin{axis}[
        height=0.52\linewidth,      
        view={60}{30},
        axis lines=none,
        declare function={
            f(\x,\y)=10-(\x^2+\y^2);
        }
    ]       

    \addplot3[
        surf,
        faceted color=gray!60,
        fill=surf,
        opacity=0.9,
        samples=20,
        domain=-2.5:2.5,
        domain y=-2:2,
    ]{f(x,y)};
    
    \addplot3[
        fill=plane,
        opacity=1,
        fill opacity=0.45,
        surf,
        shader=flat,
        colormap={graymap}{color=(plane) color=(plane)},
        samples=2,  
        domain=-2.5:2.5,
        domain y=-2:2,
    ] {8.5};

    \addplot3[mark=*,black, mark size=1.2] coordinates {(0,0,8.5)}; 

    \addplot3[mark=*,customblack, mark size=1.2] coordinates {(1.2,1.2,7.12)};

    \addplot3[mark=*,black, mark size=1] coordinates {(1.2,1.2,10.5)};

    \addplot3[->, custompink, line width=1pt, arrows={-Latex[length=5pt, width=2.5pt]}] 
    coordinates {(0,0,8.5) (1.2,1.2,8.5)}; 

    \addplot3[->, black, line width=1pt, arrows={-Latex[length=5pt, width=2.5pt]}] coordinates {(0,0,8.5) (1.2,1.2,10.5)}; 

    \addplot3[->, customcharcol, line width=0.8pt, arrows={-Latex[length=5pt, width=2.5pt]}] coordinates {(1.2,1.2,10.5) (1.2,1.2,8.5)};
    
    \addplot3[-,dotted, customcharcol,line width=1 pt] coordinates {(1.2,1.2,8.5) (1.2,1.2,7.12)}; 

    \node[black, below left, scale=1] at (axis cs:0,0,8.5) {\(\boldsymbol{A}_{k}^{(t)}\)};

    \node[black, below left, scale=1] at (axis cs:1.2,1.9,6) {\(\boldsymbol{A}_{k+1}^{(t)}\)};

    \node[customdarkgray, below left, scale=1] at (axis cs:-2,-0.45,10) {\(T_{\boldsymbol{A}_{k}^{(t)}}\mathcal{M}\)};
    
    \node[customdarkgray, below, scale=1] at (axis cs:2,-1.4,6) {\(\mathcal{M}\)};
    
    \node[black, right, scale=0.8] at (axis cs:1.1,-0.4,12.3) {\(\nabla_{\boldsymbol{A}_{k}^{(t)}} \bar{f}\)};
        
    \node[custompink, right, scale=0.8] at (axis cs:1.5,-0.35,8.9) {\(\tilde{\nabla}_{\boldsymbol{A}_{k}^{(t)}} f\)};

    \node[customcharcol, below right, scale=0.8] at (axis cs:1.2,1.2,10.5) {Projection};

    \node[customcharcol, right, scale=0.8] at (axis cs:1.2,1.2,7.5) {Retraction};

\end{axis}
\end{tikzpicture}
\caption{A geometric illustration of the two-step optimization process on the Stiefel manifold $\mathcal{M}$. Starting from the \(k\)-th iteration point of the \(t\)-th task, \(\boldsymbol{A}_{k}^{(t)} \in \mathcal{M}\), the Euclidean gradient (black arrow) is first orthogonally projected onto the tangent space \(T_{\boldsymbol{A}_{k}^{(t)}}\mathcal{M}\) (light gray plane) to obtain the Riemannian gradient (red arrow). The subsequent retraction (dotted line) maps this gradient back onto the manifold, producing the updated point \(\boldsymbol{A}_{k+1}^{(t)} \in \mathcal{M}\). This process makes sure the updates \(\boldsymbol{A}_{k+1}^{(t)}\) remain on $\mathcal{M}$ at every iteration, thus preserving the geometric structure of the representation.}
\label{fig:riemannian_optimization}
\end{figure}
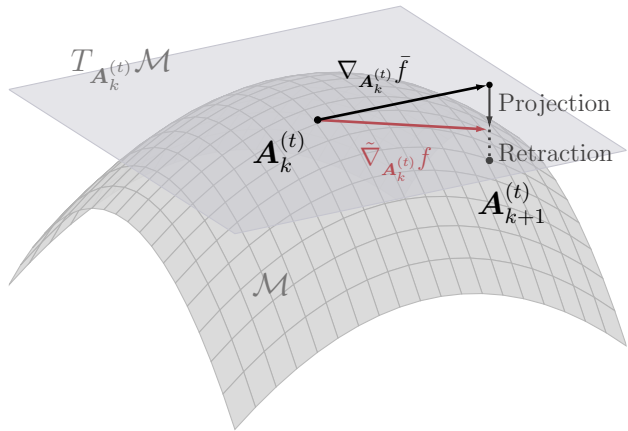

\subsection{Riemannian Gradient Computation via Orthogonal Projection}\label{Riemannian Gradient Computation via Orthogonal Projection}

We now compute the Riemannian gradient explicitly. The key idea is decomposing an arbitrary matrix \(\boldsymbol{G}\) into tangent and normal components at \(\boldsymbol{A}^{(t)}\), then removing the normal part. 

\subsubsection{Normal Space and Decomposition}
From the projection formula introduced earlier, computing the Riemannian gradient requires  
\(
\mathcal{P}_{T_{\boldsymbol{A}^{(t)}} \operatorname{St}(p, r)}(\boldsymbol{G}) = \boldsymbol{G} - N_{\boldsymbol{A}^{(t)}} \operatorname{St}(p, r),
\)
where \( \mathcal{P}_{T_{\boldsymbol{A}^{(t)}} \operatorname{St}(p, r)} \) is the orthogonal projection operator onto the tangent space of the Stiefel manifold at \( \boldsymbol{A}^{(t)} \). Here, \( N_{\boldsymbol{A}^{(t)}} \operatorname{St}(p, r) \) denotes the normal component, to be characterized explicitly.

Consider \( \boldsymbol{G} \in \mathbb{R}^{p \times r} \). We start by writing \( \boldsymbol{G} \) in terms of \( \boldsymbol{A}^{(t)} \) and its orthogonal complement \( (\boldsymbol{A}^{(t)})_\perp \):  
\( \boldsymbol{G} = \boldsymbol{A}^{(t)} \boldsymbol{W} + (\boldsymbol{A}^{(t)})_\perp \boldsymbol{K} \),  
where \( \boldsymbol{W} \in \mathbb{R}^{r \times r} \) and \( \boldsymbol{K} \in \mathbb{R}^{(p-r) \times r} \). The matrix \( (\boldsymbol{A}^{(t)})_\perp \) spans the orthogonal complement of \( \boldsymbol{A}^{(t)} \) in \( \mathbb{R}^p \).

Because each tangent vector \( \boldsymbol{H} \in T_{\boldsymbol{A}^{(t)}} \operatorname{St}(p, r) \) takes the form  
\( \boldsymbol{H} = \boldsymbol{A}^{(t)} \boldsymbol{\Omega} + (\boldsymbol{A}^{(t)})_\perp \boldsymbol{B} \),  
with \( \boldsymbol{\Omega} \in \operatorname{Skew}(r) \) and \( \boldsymbol{B} \in \mathbb{R}^{(p-r) \times r} \), we apply orthogonality conditions to find the normal space:  
\( \langle \boldsymbol{G}, \boldsymbol{H} \rangle = 0 \quad \forall \ \boldsymbol{\Omega} \in \operatorname{Skew}(r), \ \boldsymbol{B} \in \mathbb{R}^{(p-r) \times r},\) where \(\operatorname{Skew}(r)\) represents the set of all \(r \times r\) skew-symmetric matrices, i.e., matrices \(\boldsymbol{\Omega}\) satisfying \(\boldsymbol{\Omega}^T = -\boldsymbol{\Omega}\). For more details, see Section \ref{Skew-Symmetric Matrices}.

Orthogonality conditions imply
\(
\langle \boldsymbol{W}, \boldsymbol{\Omega} \rangle = 0, \quad \langle \boldsymbol{K}, \boldsymbol{B} \rangle = 0.
\) 
Let \( \operatorname{Sym}(r) \) denote the space of symmetric \( r \times r \) matrices. Since \( \operatorname{Skew}(r) \) and \( \operatorname{Sym}(r) \) are orthogonal complements, the condition \( \langle \boldsymbol{W}, \boldsymbol{\Omega} \rangle = 0 \) for all \( \boldsymbol{\Omega} \in \operatorname{Skew}(r) \) implies that \( \boldsymbol{W} \) must be symmetric. Thus, the normal space at \( \boldsymbol{A}^{(t)} \) is
\(
N_{\boldsymbol{A}^{(t)}} \operatorname{St}(p, r) = \{ \boldsymbol{A}^{(t)} \boldsymbol{S} : \boldsymbol{S} \in \operatorname{Sym}(r) \}
\).
Consequently, the tangent space is
\(
T_{\boldsymbol{A}^{(t)}} \operatorname{St}(p, r) = \{ \boldsymbol{A}^{(t)} \boldsymbol{\Omega} + (\boldsymbol{A}^{(t)})_\perp \boldsymbol{B} : \boldsymbol{\Omega} \in \operatorname{Skew}(r), \boldsymbol{B} \in \mathbb{R}^{(p-r) \times r} \}.
\) 

Given that the normal component at \( \boldsymbol{A}^{(t)} \) is \( \boldsymbol{A}^{(t)} \operatorname{sym}((\boldsymbol{A}^{(t)})^\top \boldsymbol{G}) \), we obtain:
\begin{equation}\label{Orthogonal_Projection}
\mathcal{P}_{T_{\boldsymbol{A}^{(t)}} \operatorname{St}(p, r)}(\boldsymbol{G}) = \boldsymbol{G} - \boldsymbol{A}^{(t)} \operatorname{sym}((\boldsymbol{A}^{(t)})^\top \boldsymbol{G}).
\end{equation}
Removing the symmetric part \( \operatorname{sym}((\boldsymbol{A}^{(t)})^\top \boldsymbol{G}) \) ensures the final vector stays in the tangent space.

\subsubsection{Riemannian Gradient Computation}  
For a smooth function \( f: \operatorname{St}(p, r) \to \mathbb{R} \) with a smooth Euclidean extension \(\bar{f}: \mathbb{R}^{p \times r} \to \mathbb{R}\), we start with \(\nabla_{\boldsymbol{A}^{(t)}} \bar{f}\). The Riemannian gradient \(\tilde{\nabla}_{\boldsymbol{A}^{(t)}} f\) is then:  
\begin{equation}
\tilde{\nabla}_{\boldsymbol{A}^{(t)}} f = \nabla_{\boldsymbol{A}^{(t)}} \bar{f} - \boldsymbol{A}^{(t)} \operatorname{sym}\left((\boldsymbol{A}^{(t)})^\top \nabla_{\boldsymbol{A}^{(t)}} \bar{f}\right).
\label{eq:riemannian_gradient}
\end{equation}

These geometry-aware steps are essential for preserving the orthonormal structure of \(\boldsymbol{A}^{(t)}\) during training. They support the geometric core of our multi-task learning approach. In practice, implementing this simply involves replacing \(\nabla_{\boldsymbol{A}^{(t)}} \bar{f}\) by \(\tilde{\nabla}_{\boldsymbol{A}^{(t)}} f\) in the update.

\subsection{Polar Retraction on the Stiefel Manifold}\label{Polar Retraction on the Stiefel Manifold}
After computing a Riemannian gradient step, we must ensure the updated representation matrices stay on the Stiefel manifold. Retractions handle this by smoothly mapping points back onto the manifold. Among possible choices, the polar retraction performs particularly well on the Stiefel manifold due to its numerical stability and inherent preservation of orthonormality.

\subsubsection{Definition of Polar Retraction}
For a point \( \boldsymbol{A}^{(t)} \in \operatorname{St}(p, r) \) and tangent vector \( \boldsymbol{H} \in T_{\boldsymbol{A}^{(t)}} \operatorname{St}(p, r) \), the polar retraction \( \mathcal{R}_{\boldsymbol{A}^{(t)}} \) is defined as  
\( \mathcal{R}_{\boldsymbol{A}^{(t)}}(\boldsymbol{H}) = (\boldsymbol{A}^{(t)} + \boldsymbol{H})((\boldsymbol{A}^{(t)} + \boldsymbol{H})^\top (\boldsymbol{A}^{(t)} + \boldsymbol{H}))^{-1/2} \).  
Using the Gram matrix, this reduces to:
\begin{equation}
    \mathcal{R}_{\boldsymbol{A}^{(t)}}(\boldsymbol{H}) = (\boldsymbol{A}^{(t)} + \boldsymbol{H})(\boldsymbol{I}_r + \boldsymbol{H}^\top \boldsymbol{H})^{-1/2}. \label{eq:retraction}
\end{equation}

This leads to the iterative update  
\(
\boldsymbol{A}_{k+1}^{(t)} = \mathcal{R}_{\boldsymbol{A}_{k}^{(t)}}(\boldsymbol{H}),
\)
where \( \boldsymbol{H} = -\alpha \tilde{\nabla}_{\boldsymbol{A}^{(t)}} f \) is the negative Riemannian gradient direction and \( k \) is the iteration index. Applying the polar retraction keeps each updated \( \boldsymbol{A}_{k+1}^{(t)} \) on the manifold, preserving orthonormality central to our geometric approach.

Having established the core update mechanism, we now outline GeoERM's full workflow, detailing how these updates integrate into the overall learning procedure.

\subsection{Workflow and Algorithm Phases}\label{sec:workflow-phases}  
GeoERM proceeds in two steps: manifold optimization and parameter refinement. 

\paragraph{Step 1: Manifold-Constrained Updates}  
The first step updates the task-specific representation matrices \( \{\boldsymbol{A}^{(t)}\}_{t=1}^T \) and the shared center \( \overline{\boldsymbol{A}} \) by optimizing the objective:
\(
\sum_{t=1}^T \left[ f^{(t)}\left(\boldsymbol{A}^{(t)} \boldsymbol{\theta}^{(t)}\right) + \frac{\lambda}{\sqrt{n}} \left\| \boldsymbol{A}^{(t)} (\boldsymbol{A}^{(t)})^\top - \overline{\boldsymbol{A}} (\overline{\boldsymbol{A}})^\top \right\|_2 \right],
\)
encouraging each \( \boldsymbol{A}^{(t)} \) to remain close to a shared subspace while maintaining orthonormality.

The gradients of \( \{\boldsymbol{A}^{(t)}\}_{t=1}^T \) are computed jointly via backpropagation. We then project each \( \nabla_{\boldsymbol{A}^{(t)}} \bar{f} \) onto the tangent space of the Stiefel manifold:  
\(
\tilde{\nabla}_{\boldsymbol{A}^{(t)}} f = \mathcal{P}_{T_{\boldsymbol{A}^{(t)}} \operatorname{St}(p, r)}(\nabla_{\boldsymbol{A}^{(t)}} \bar{f}).
\)
This projection turns the Euclidean gradient into a Riemannian gradient. After applying Adam to update all parameters, we perform polar retraction to ensure each iterate remains on the manifold:  
\(
\boldsymbol{A}^{(t)} \leftarrow \mathcal{R}_{\boldsymbol{A}^{(t)}}(-\alpha \cdot \tilde{\nabla}_{\boldsymbol{A}^{(t)}} f).
\) A similar update occurs for \( \overline{\boldsymbol{A}} \), ensuring alignment with task-specific representations. 

\paragraph{Step 2: Parameter Refinement}  
In the second step, we update the task-specific regression coefficients \( \{\boldsymbol{\beta}^{(t)}\}_{t=1}^T \) using the learned low-rank representations from Step 1. For each task \( t \), we solve the following regularized problem
\(
\widehat{\boldsymbol{\beta}}^{(t)} = \underset{\boldsymbol{\beta} \in \mathbb{R}^p}{\arg \min} \left\{ f^{(t)}(\boldsymbol{\beta}) + \frac{\gamma}{\sqrt{n}} \left\| \boldsymbol{\beta} - \widehat{\boldsymbol{A}}^{(t)} \widehat{\boldsymbol{\theta}}^{(t)} \right\|_2 \right\}.
\)

This step refines each task's coefficients by regularizing toward the shared low-dimensional structure, while preserving task-specific variations through flexible shrinkage.

\subsection{Main Algorithm GeoERM}\label{Main Algorithm GeoERM}

\paragraph{Review of Prior Work}  
The GeoERM algorithm builds upon the two-step framework proposed by \citet{tian2023learning}, which splits multi-task learning into two linked phases. In the first phase, following \citet{duan2023adaptive}, a penalty \(\| \boldsymbol{A}^{(t)} (\boldsymbol{A}^{(t)})^\top - \overline{\boldsymbol{A}} (\overline{\boldsymbol{A}})^\top \|_2\) encourages each \(\boldsymbol{A}^{(t)}\) to align with a shared subspace while keeping its own features. In the second phase, regularization techniques taken from distance-based MTL and transfer learning \citep{scholkopf2001generalized,kuzborskij2013stability,kuzborskij2017fast} refine the task-specific parameters to reduce negative transfer. While GeoERM follows the conceptual structure of \citet{tian2023learning}, it distinguishes itself by using geometry-aware optimization on the Stiefel manifold. By optimizing directly on the manifold and using Riemannian gradients and retractions, GeoERM keeps the learned representation matrices orthonormal and respects the manifold’s intrinsic geometry.

{\linespread{1.2}
\begin{algorithm}[H]
\caption{GeoERM (Geometric Empirical Risk Minimizer)} \label{alg:geoerm}
\begin{algorithmic}[1]

\State \textbf{Input:} $\{\boldsymbol{X}^{(t)}, \boldsymbol{Y}^{(t)}\}_{t=1}^T = \{\{\boldsymbol{x}_i^{(t)}, y_i^{(t)}\}_{i=1}^n\}_{t=1}^T$, penalty parameters $\lambda$, $\gamma$, \texttt{num\_iterations}

\State \textbf{Output:} Estimators $\{\widehat{\boldsymbol{\beta}}^{(t)}\}_{t=1}^T$, $\widehat{\overline{\boldsymbol{A}}}$


\State \textbf{Step 1.} Initialize $\{\boldsymbol{A}^{(t)}\}_{t=1}^T \subset \mathcal{O}_{p \times r}$, $\overline{\boldsymbol{A}} \in \mathcal{O}_{p \times r}$, $\{\boldsymbol{\theta}^{(t)}\}_{t=1}^T \subset \mathbb{R}^r$

\For{iteration $= 1$ to \texttt{num\_iterations}}

    \State Compute Euclidean gradients of $\bar{f}$ (Eq.~\eqref{Objective}) with respect to all parameters

  \For{$t = 1, \dots, T$}
    \State Compute Riemannian gradient: $\tilde{\nabla}_{\boldsymbol{A}^{(t)}} f = \mathcal{P}_{T_{\boldsymbol{A}^{(t)}} \operatorname{St}(p, r)}(\nabla_{\boldsymbol{A}^{(t)}} \bar{f})$ \Comment{Eq.~\eqref{eq:riemannian_gradient}}
    \State Update via retraction: $\boldsymbol{A}^{(t)} \leftarrow \mathcal{R}_{\boldsymbol{A}^{(t)}}(-\alpha \cdot \tilde{\nabla}_{\boldsymbol{A}^{(t)}} f)$ \Comment{Eq.~\eqref{eq:retraction}}
    \State Update: $\boldsymbol{\theta}^{(t)} \leftarrow \boldsymbol{\theta}^{(t)} - \alpha \cdot \nabla_{\boldsymbol{\theta}^{(t)}} \bar{f}$
  \EndFor

  \State Compute Riemannian gradient:  $\tilde{\nabla}_{\overline{\boldsymbol{A}}} f = \mathcal{P}_{T_{\overline{\boldsymbol{A}}} \operatorname{St}(p, r)}(\nabla_{\overline{\boldsymbol{A}}} \bar{f})$ \Comment{Eq.~\eqref{eq:riemannian_gradient}}
  \State Update via retraction: $\overline{\boldsymbol{A}} \leftarrow \mathcal{R}_{\overline{\boldsymbol{A}}}(-\alpha \cdot \tilde{\nabla}_{\overline{\boldsymbol{A}}} f)$ \Comment{Eq.~\eqref{eq:retraction}}

\EndFor

\State \textbf{Step 1 output:} $\{\widehat{\boldsymbol{A}}^{(t)}, \widehat{\boldsymbol{\theta}}^{(t)}\}_{t=1}^T$, $\widehat{\overline{\boldsymbol{A}}}$


\State \textbf{Step 2.} Update $\{{\boldsymbol{\beta}}^{(t)}\}_{t=1}^T \subset \mathbb{R}^{p}$
\For{$t = 1, \dots, T$,} 
\State \(\widehat{\boldsymbol{\beta}}^{(t)} = \arg \min_{\boldsymbol{\beta} \in \mathbb{R}^p} \left\{ f^{(t)}(\boldsymbol{\beta}) + \frac{\gamma}{\sqrt{n}} \left\| \boldsymbol{\beta} - \widehat{\boldsymbol{A}}^{(t)} \widehat{\boldsymbol{\theta}}^{(t)} \right\|_2 \right\}\)
\EndFor
\end{algorithmic}
\end{algorithm}
}

\subsubsection{Inputs and Outputs} \label{sec:inputs-outputs}  
GeoERM takes as input \( T \) tasks each with \( n \) observations \(\{\boldsymbol{X}^{(t)}, \boldsymbol{Y}^{(t)}\}_{t=1}^T\). For each task \( t \), \(\boldsymbol{X}^{(t)} \in \mathbb{R}^{n \times p}\) are the input features and \(\boldsymbol{Y}^{(t)} \in \mathbb{R}^n\) are the outputs. The algorithm also needs regularization parameters \(\lambda\) and \(\gamma\): \(\lambda\) controls how closely each \(\boldsymbol{A}^{(t)}\) matches a shared center matrix \(\overline{\boldsymbol{A}}\), while \(\gamma\) steers task-specific coefficients toward their low-rank representations. Additional hyperparameters, such as the step size \(\alpha\) and the number of iterations, are also provided.

GeoERM outputs task-specific representation matrices \(\{\widehat{\boldsymbol{A}}^{(t)}\}_{t=1}^T\), each on the Stiefel manifold \(\operatorname{St}(p, r)\), and a shared center matrix \(\widehat{\overline{\boldsymbol{A}}}\) that captures common structure across tasks. It also returns refined task-specific regression coefficients \(\{\widehat{\boldsymbol{\beta}}^{(t)}\}_{t=1}^T\), balancing shared low-rank structures and task-specific flexibility.

\section{Numerical Experiments} \label{Numerical Experiments}

We conducted numerical experiments to evaluate GeoERM’s performance under different conditions. Below, we outline our data generation methods, evaluation metrics, and computational setup. All implementation code is available at \url{https://github.com/samohtaerg/GeoERM}.

\subsection{Models for Comparison}\label{Models for Comparison}

We compare GeoERM against several baseline and related methods, following the experimental setup from \citet{tian2023learning}, ensuring fair and consistent assessment. All methods use publicly available code with parameters as originally reported.

\begin{itemize}
    \item \textbf{GeoERM}: Our \textbf{GeoERM} algorithm integrates gradient projection, polar retraction, and loss minimization. Following \citet{tian2023learning}, we set \(\lambda = \sqrt{r(p + \log T)}\) and \(\gamma = \sqrt{p + \log T}\) to match pERM’s configuration. We employ the Adam optimizer \citep{kingma2015method} with a 0.01 learning rate and default hyperparameters.

    \item \textbf{Penalized Empirical Risk Minimization (pERM)}: Implemented from \citet{tian2023learning}, using their public code (\url{https://github.com/ytstat/RL-MTL-TL}) and a fixed learning rate of 0.01 with default settings.

    \item \textbf{Spectral Method}: As described by \citet{tian2023learning}, utilizing their implementation and Adam with a learning rate of 0.01.

    \item \textbf{Method-of-Moments (MoM)}: Implemented following \citet{tripuraneni2020provable} with default parameters.

    \item \textbf{Adaptive Representation Learning (AdaptRep)}: Using the code from \citet{chua2021fine}, retaining default parameters.

    \item \textbf{Adaptive and Robust Multi-Task Learning (ARMUL)}: Implemented according to \citet{duan2023adaptive} (\url{https://github.com/kw2934/ARMUL}) with default settings.

    \item \textbf{Pooled Regression}: Following \citet{tian2023learning}, based on methods by \citet{ben2010theory} and \citet{crammer2008learning}, using implementations from \texttt{sklearn.linear\_model} for linear and logistic regression.

    \item \textbf{Single-Task Regression}: Fits individual regression models separately for each task, serving as a baseline.
\end{itemize}

\subsection{Simulation}\label{Simulation}

\paragraph{Problem Setup} We generated synthetic datasets to test GeoERM. We consider \(T = 50\) tasks, each with \(n = 100\) observations and \(p = 50\) features. The latent representation dimension is \(r = 5\), and we vary \(h \in [0.1, 0.9]\) to control how much tasks differ. Coefficients come from $\text{Uniform}(-H, H)$ with \(H = 2\).

\paragraph{Regular tasks} We build a shared basis \( \boldsymbol{A}_{\text{center}} \in \mathbb{R}^{p \times r} \) from the top \( r \) singular vectors of a random Gaussian matrix. For regular tasks (\( t \in S \)), we add perturbations \( \Delta \boldsymbol{A} \sim \text{Uniform}(-h, h) \) and set  \(\boldsymbol{A}^{(t)} = \boldsymbol{A}_{\text{center}} + \Delta \boldsymbol{A}\). We then orthonormalize \(\boldsymbol{A}^{(t)}\) via a QR decomposition. Task-specific coefficients are \(\boldsymbol{\beta}^{(t)} = \boldsymbol{A}^{(t)} \boldsymbol{\theta}^{(t)}\) where \( \boldsymbol{\theta}^{(t)} \sim \text{Uniform}(-H, H) \).  

\paragraph{Outlier tasks} For outlier tasks \( t \in S^c \), \( \boldsymbol{\beta}^{(t)} \) is drawn independently from \( \text{Uniform}(-3, 3) \), ignoring the shared basis. Features \(\boldsymbol{X}^{(t)}\) follow \(\mathcal{N}(0,1)\) for regular tasks and \(\mathcal{N}(0,2)\) for outliers. Responses \(\boldsymbol{Y}^{(t)}\) come from a linear model \(\boldsymbol{Y}^{(t)} = \boldsymbol{X}^{(t)}\boldsymbol{\beta}^{(t)} + \boldsymbol{\epsilon}^{(t)}\) with \(\boldsymbol{\epsilon}^{(t)} \sim \mathcal{N}(0, \boldsymbol{I})\), or a logistic model using \(\sigma\left((\boldsymbol{X}^{(t)})^\top \beta^{(t)}\right)\).

\paragraph{Evaluation Metrics}\label{evalmetrics}

We evaluate performance using the maximum estimation error among non-outlier tasks $S \subseteq T$:
\(
\text{Error}_{\max} = \max_{t \in S} \|\hat{\boldsymbol{\beta}}^{(t)} - \boldsymbol{\beta}^{(t)}\|_2,
\)
where \(\hat{\boldsymbol{\beta}}^{(t)}\) are estimated coefficients and \(\boldsymbol{\beta}^{(t)}\) are the true values. This metric highlights performance on the most challenging tasks.

\paragraph{Computational Setup}\label{computationsetup}

All experiments were conducted in Python using PyTorch \citep{paszke2019pytorch} on NYU’s Greene HPC cluster. Each run employed one NVIDIA A100 GPU (32 GB) and four CPU cores. Results were averaged over 100 iterations to ensure stability.

\subsubsection{Simulation with Different Heterogeneity Parameter h}
We now examine how the heterogeneity parameter \(h\) influences model performance. As \(h\) increases, tasks become more diverse, enabling assessment of each method’s adaptability. We vary \(h\) under two scenarios: with no  outliers (\(\epsilon = 0\)) and with outliers (\(\epsilon = 0.1\)).

\begin{figure}[H]
    \centering
    \includegraphics[width=0.95\linewidth]{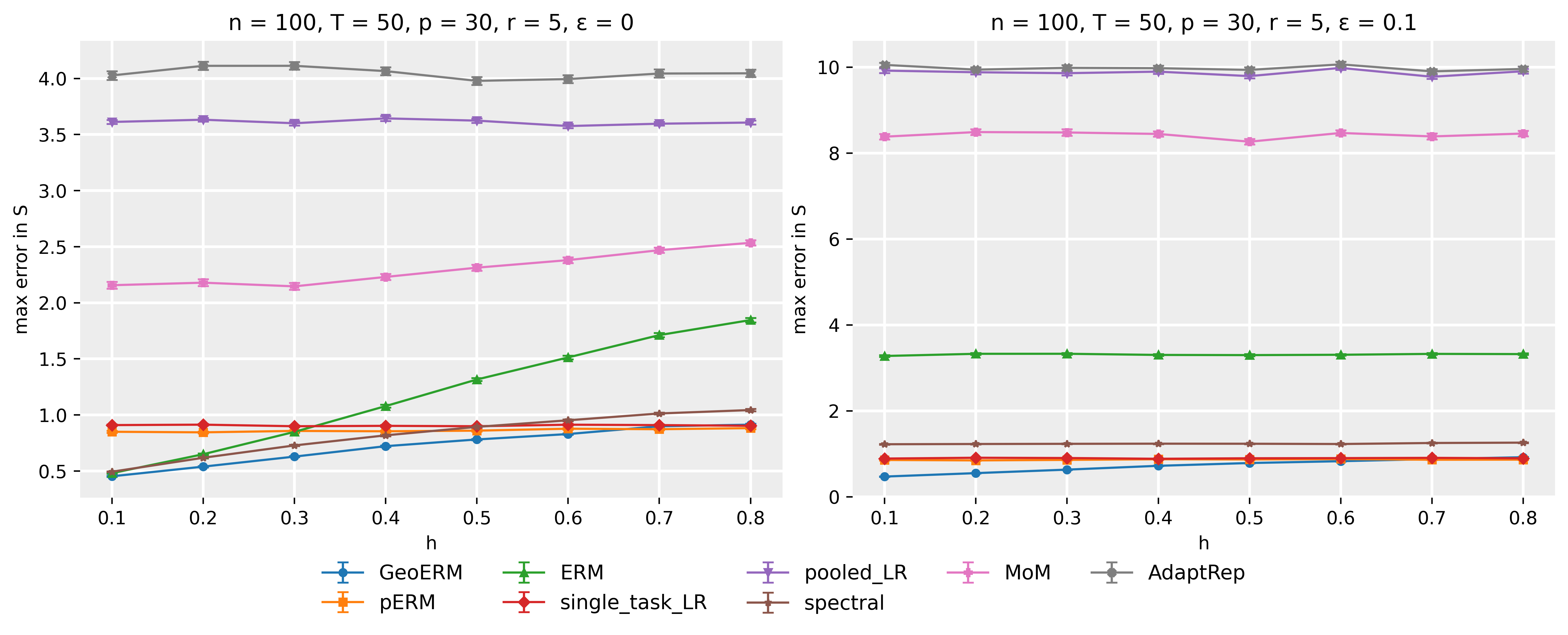}
    \caption{Maximum error across varying heterogeneity parameter \( h \), evaluated under two outlier proportion settings: \( \epsilon = 0 \) (left) and \( \epsilon = 0.1 \) (right). Simulations are conducted with \( n = 100 \), \( T = 50 \), \textbf{\( p = 30 \)}, and \( r = 5 \). Evaluation metrics and computational settings are described
 in Sections~\ref{evalmetrics}.}
    \label{fig:hpc305}
\end{figure}
Figure~\ref{fig:hpc305} shows the maximum error in the regular dataset \( S \) as we vary the heterogeneity level \( h \).
As \( h \) increases and tasks become more diverse, GeoERM consistently achieves the lowest error, demonstrating robustness to task heterogeneity. In the left panel (\(\epsilon = 0\)), pERM and AdaptRep exhibit stable performance across varying \( h \), whereas GeoERM shows a slight increase in error but maintains a clear advantage over all other methods at each \( h \). Spectral methods give moderate results, and pooled LR and single-task LR struggle with heterogeneous tasks. With outliers (\(\epsilon = 0.1\), right panel), GeoERM maintains minimal error growth as \( h \) increases, outperforming all other methods. MoM and spectral methods resist outliers to some extent but are less robust than GeoERM, while pooled LR suffers severely. Additional experimental results are provided in Appendix \ref{Supplementary Experiments}.

Overall, GeoERM consistently achieves the lowest estimation error and best adapts to changing \( h \), \( n \), and \( T \), as well as to varying levels of outlier contamination. Additional results for varying sample sizes \( n \) and task numbers \( T \) are provided in Appendix \ref{Supplementary Experiments}. The geometry-aware optimization provides stability and robustness under high-dimensional, heterogeneous conditions, outperforming other methods in adaptability and accuracy. While methods like pERM, spectral methods, and MoM show reasonable adaptability, none match GeoERM’s performance, highlighting its effectiveness in challenging multi-task environments.

\subsection{A Real-data Study}\label{Realdata}

To further evaluate GeoERM, we apply it to the Human Activity Recognition (HAR) dataset from the UCI Machine Learning Repository~\citep{anguita2013public}, accessible at \url{https://archive.ics.uci.edu/dataset/240/human+activity+recognition+using+smartphones}. The HAR dataset offers a challenging high-dimensional setting, naturally structured into tasks (subjects), making it ideal for assessing multi-task methods.

\subsubsection{Human Activity Recognition Dataset}

The HAR dataset contains recordings from 30 subjects performing six daily activities: \texttt{WALKING}, \texttt{WALKING\_UPSTAIRS}, \texttt{WALKING\_DOWNSTAIRS}, \texttt{SITTING}, \texttt{STANDING}, and \texttt{LAYING}. Each subject carried a Samsung Galaxy S II smartphone with inertial sensors sampling linear acceleration and angular velocity at 50Hz. The full dataset has \( n = 10{,}299 \) samples, \( T = 30 \) tasks (one per subject), and \( p = 561 \) time- and frequency-domain features. These features capture both common and subject-specific motion patterns, making this problem a natural fit for multi-task learning.

\subsubsection{Experimental Setup}

Following \citet{tian2023learning}, we normalize each subject’s feature matrix separately with a StandardScaler, so that they are comparable across tasks. We then group activities into two classes—static (\texttt{SITTING}, \texttt{LAYING}) and dynamic (all others)—resulting in a binary classification problem.

We evaluate multi-task logistic regression models, including GeoERM and several baselines: single-task regression, pooled regression, ERM, ARMUL, pERM, and spectral methods. For each model, we estimate task-specific parameters \(\hat{\beta}_t\) and measure classification error on a held-out test set. All implementations use default settings as described in Section~\ref{Models for Comparison}. Each experiment was repeated 100 times on the NYU Greene HPC cluster to ensure stable results.

We use classification error as our main metric, defined as the proportion of misclassified labels. By comparing errors across all tasks, we assess how well each method uses a shared structure while handling task differences.

\subsubsection{Results and Analysis}

Table~\ref{tab:results} reports the average classification error rates and standard deviations over 30 tasks for different dimensions of the representation space \( r \). GeoERM consistently achieves the lowest error, outperforming all other methods at every tested \( r \).

\begin{table}[H]
\centering
\caption{Classification error rates (mean and standard deviation) in percentage on test data, averaged across 30 tasks and varying \( r \).}
\label{tab:results}
\resizebox{\textwidth}{!}{ 
\begin{tabular}{lccccccc}
\toprule
$r$/Method & Single-task & Pooled & ERM & ARMUL & pERM & Spectral & GeoERM \\
\midrule
$r = 5$ & 1.68 (0.20) & 1.80 (0.17) & 1.50 (0.18) & 2.16 (0.22) & 1.35 (0.18) & 1.88 (0.19) & \textbf{1.04 (0.15)} \\
$r = 10$ & 1.67 (0.20) & 1.80 (0.18) & 1.40 (0.16) & 1.77 (0.21) & 1.33 (0.19) & 1.49 (0.18) & \textbf{1.02 (0.15)} \\
$r = 15$ & 1.69 (0.21) & 1.81 (0.18) & 1.39 (0.17) & 1.68 (0.21) & 1.34 (0.18) & 1.53 (0.20) & \textbf{1.04 (0.15)} \\
\bottomrule
\end{tabular}
}
\end{table}

GeoERM’s best result appears at \( r = 10 \), with an error rate of 1.02\%, while pERM (the second-best method) has an error of 1.33\%. Single-task and pooled regression perform much worse, showing they do not effectively use shared structure in this high-dimensional setting. By integrating geometric constraints directly into the optimization process, GeoERM captures subtle relationships between tasks and remains robust as model complexity (rank \( r \)) changes.

These real-data results reinforce earlier simulation findings, validating the practical advantages of geometry-aware multi-task learning in real-world scenarios.

\section{Discussions}\label{discussion}

We introduced GeoERM, a geometric framework for multi-task learning that explicitly integrates the intrinsic geometry of representation matrices into the optimization process. By combining Riemannian gradient computation and retraction operators, our method ensured that parameter updates remain on the manifold at every step, preserving orthonormality and leveraging manifold structure for more stable learning. Our theoretical analysis established the geometric foundations of this approach, showing how tangent space projections and manifold retractions enable effective, geometry-aware optimization.

Extensive numerical experiments showed that GeoERM consistently outperforms existing methods across diverse scenarios. Simulations confirmed GeoERM’s adaptability to changing sample sizes (\( n \)), feature dimensions (\( p \)), and task numbers (\( T \)), especially when \( n \gtrsim p + \log T \). Applying GeoERM to the Human Activity Recognition dataset further supported these results, yielding much lower classification error rates than conventional multi-task learning approaches. Together, these findings demonstrate the practical benefits of incorporating geometric constraints into representation learning for multi-task problems.

\paragraph{Limitations and Future Directions}  
Despite these advances, several theoretical and practical challenges remain. Theoretically, deriving rigorous error bounds in a Riemannian setting is non-trivial. Classic results in high-dimensional Euclidean statistics \citep{wainwright2019high} depend on Euclidean metrics and linear assumptions. In contrast, estimation on a Riemannian manifold involves curved spaces and different notions of distance, requiring adaptations of standard tools like concentration inequalities for sample covariance matrices. An important open direction is to develop Riemannian concentration inequalities.

Practically, our reliance on a factorization \(\beta = A \theta\), while grounded in prior work \citep{wang2018learning, bastani2021predicting, chua2021fine, li2022transfer, tian2023learning, duan2023adaptive, gu2023commute}, may limit scalability in large-scale or high-dimensional settings. Nonparametric techniques, such as diffusion maps or RKHS-based approaches, could offer more flexible ways to discover low-dimensional geometric structures without strict parametric assumptions. Integrating these methods with geometry-aware optimization could create more scalable frameworks suitable for complex datasets, including those found in biomedical imaging or deep learning contexts, where low-dimensional manifolds often underlie observed data.

A promising path forward is twofold. First, establish error bounds for single-task learning on Riemannian manifolds and then extend these results to multi-task problems. This would allow us to do rigorous uncertainty checks. Second, develop more flexible, nonparametric frameworks that preserve the benefits of geometric optimization while addressing scalability concerns in large, high-dimensional problems. Strengthening both the theoretical and practical aspects of geometric MTL would significantly broaden its impact in modern machine learning.

\begingroup
\spacingset{1}
\bibliographystyle{jasa3}
\bibliography{main}
\endgroup

\newpage

\appendix
\section*{Appendices}
\setcounter{page}{1}
\section{Supplementary Experiments} \label{Supplementary Experiments}
Figure~\ref{fig:hpc505} repeats the experiment with \( p = 50 \), increasing feature dimensionality. As expected, the task becomes more challenging, and performance differences between methods become more pronounced. GeoERM again achieves the lowest maximum error, demonstrating strong adaptability under higher dimensionality. Notably, pERM’s performance is closer to GeoERM’s with outliers, indicating stable behavior under these scenarios. Spectral methods follow closely but have slightly higher errors.
\begin{figure}[ht]
    \centering
    \includegraphics[width=0.97\linewidth]{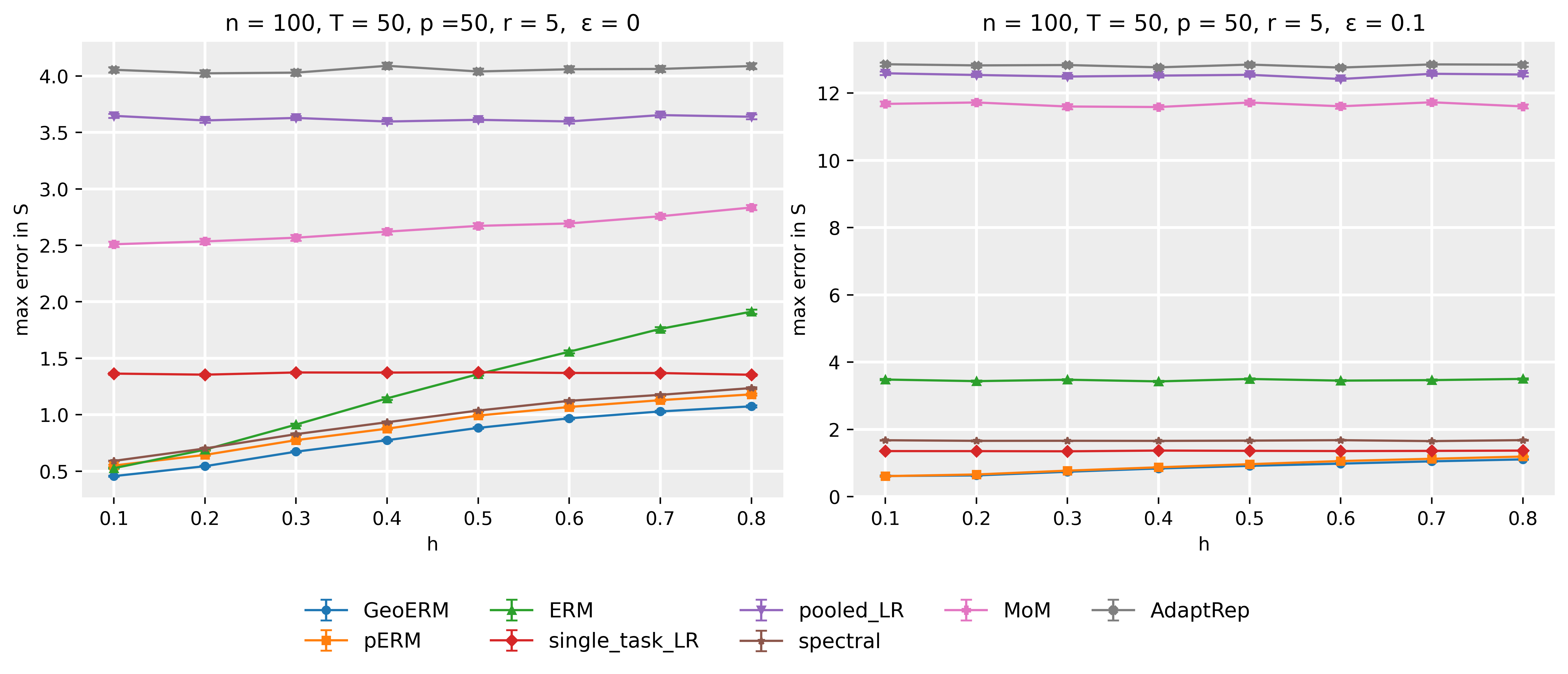}
    \caption{Maximum error across varying \( h \), under \(\epsilon = 0\) (left) and \(\epsilon = 0.1\) (right). Simulations: \( n = 100 \), \( T = 50 \), \textbf{\( p = 50 \)}, \( r = 5 \). Evaluation metrics and computational settings are in Sections~\ref{evalmetrics}.}
    \label{fig:hpc505}
\end{figure}

Figure~\ref{fig:hpc805} increases dimensionality further to \(p = 80\). GeoERM continues to perform best under \(\epsilon = 0\). However, for outlier tasks (\(\epsilon = 0.1\)), GeoERM’s performance declines. This drop matches the theoretical condition from \citet{tian2023learning} that \( n \gtrsim p + \log T \) is needed. Nevertheless, GeoERM remains the best performer on non-outlier tasks due to the weaker requirement \( n \gtrsim r + \log T \). The sharper decline in performance with increasing \( h \) reflects the difficulty of learning under high-dimensional heterogeneity.
\begin{figure}[H]
    \centering
    \includegraphics[width=0.97\linewidth]{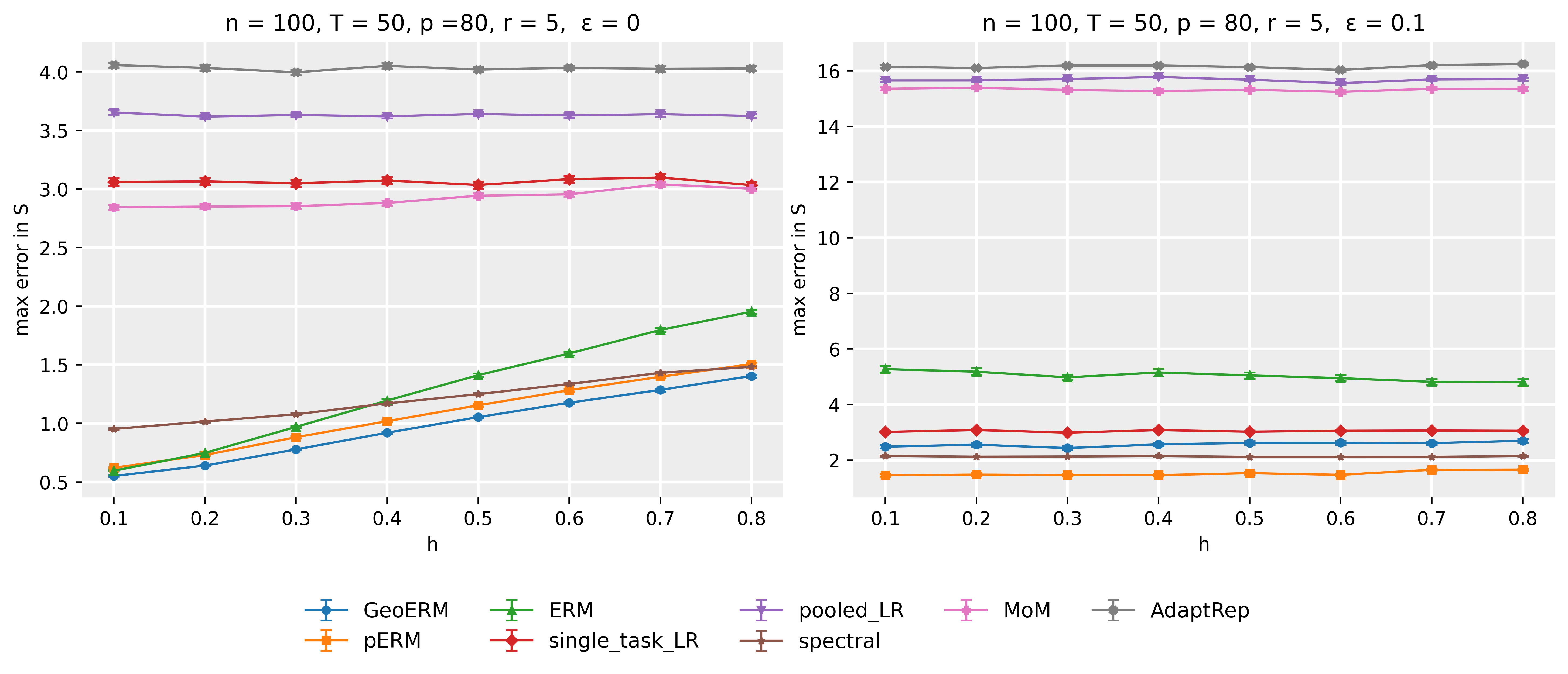}
    \caption{Maximum error across varying \( h \), under \(\epsilon = 0\) (left) and \(\epsilon = 0.1\) (right). Simulations: \( n = 100 \), \( T = 50 \), \textbf{\( p = 80 \)}, \( r = 5 \).}
    \label{fig:hpc805}
\end{figure}

To restore the balance \(n \gtrsim p + \log T\), we increase the sample size to \(n = 150\). As shown in Figure~\ref{fig:hpc150805}, GeoERM again outperforms all other methods, resembling the pattern seen at \(p = 50\). pERM and spectral methods rank second and third. ERM performs well at low \(h\) but deteriorates around \(h = 0.5\), showing its limitations under more heterogeneous conditions.

\begin{figure}[H]
    \centering
    \includegraphics[width=0.97\linewidth]{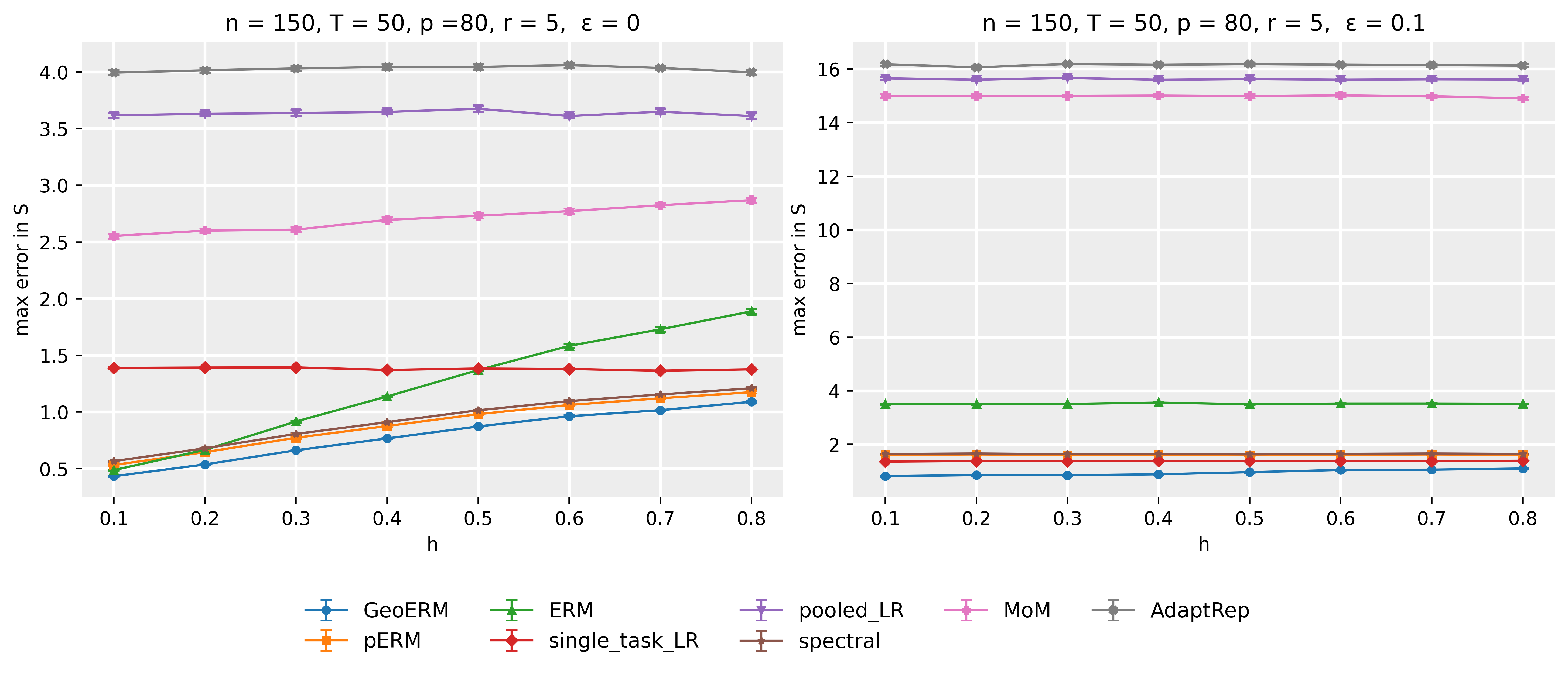}
    \caption{Maximum error across varying \( h \), under \( \epsilon = 0 \) (left) and \( \epsilon = 0.1 \) (right). Simulations: \( n = 150 \), \( T = 50 \), \( p = 80 \), \( r = 5 \).}

    \label{fig:hpc150805}
\end{figure}

\subsection{Simulation with Different Sample Size n}
We next vary \(n \in [60, 200]\) with fixed \(h = 0.5\).  As shown in Figure~\ref{fig:hpc505n}, single-task regression is highly sensitive to small \( n \) due to a lack of cross-task information sharing. In contrast, GeoERM, pERM, and spectral methods remain stable across sample sizes, reflecting their ability to exploit shared structures. GeoERM achieves the lowest maximum error across all \( n \), with or without outliers. As \( n \) increases, the gap between single-task and multitask methods narrows, and pERM and spectral methods improve as more data becomes available.
\begin{figure}[H]
    \centering
    \includegraphics[width=1\linewidth]{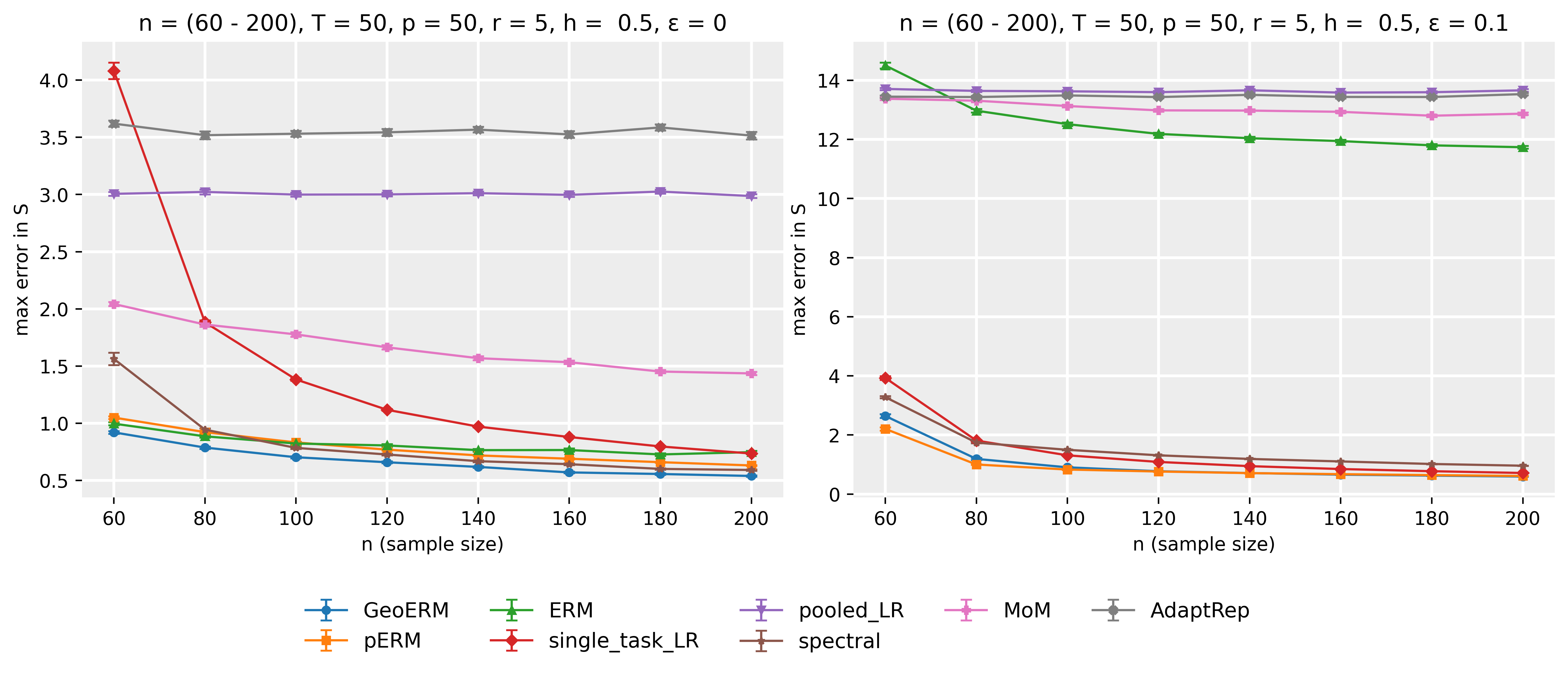}
    \caption{Simulation with varying sample size \( n \). Simulation with varying sample size \( n \in [60, 200] \) at fixed \( h = 0.5 \), \( T = 50 \), \( p = 50 \), and \( r = 5 \). Left: \( \epsilon = 0 \); Right: \( \epsilon = 0.1 \). See Section 3.2 for metric definitions and additional details.}
    \label{fig:hpc505n}
\end{figure}

\subsection{Simulation with Different Task Number T}

Finally, we vary \(T \in [20, 100]\) with \(h = 0.5\). Figure~\ref{fig:hpc505T} shows that GeoERM, pERM, and spectral methods benefit from increased task numbers, especially at smaller \( T \), consistent with observations in \citet{tian2023learning}. MoM improves more slowly and requires larger \( T \) to catch up. GeoERM consistently achieves the lowest error. Even in the presence of outliers (\(\epsilon = 0.1\)), GeoERM, pERM, and spectral methods remain robust, whereas others suffer from negative transfer.
\begin{figure}[ht]
    \centering
    \includegraphics[width=1\linewidth]{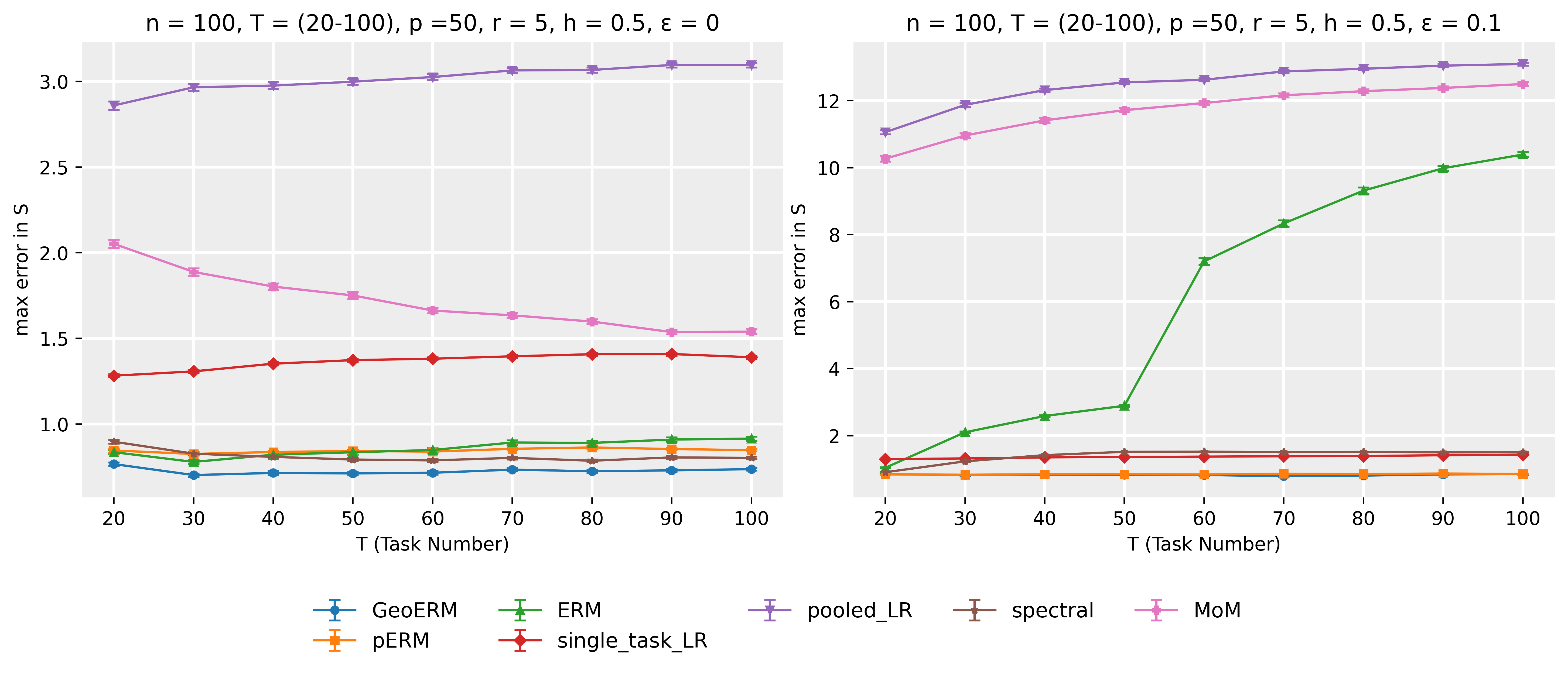}
    \caption{Maximum error across varying task number \( T \), under \(\epsilon = 0\) (left) and \(\epsilon = 0.1\) (right). Simulations: \( n = 100 \), \( p = 50 \), \( r = 5 \), \( h = 0.5 \).}
    \label{fig:hpc505T}
\end{figure}

\section{Related Works}\label{Related Works}

\paragraph{Theoretical Foundations and Efficiency Gains in MTL}
The theoretical foundations of multi-task learning (MTL) are well-established. Early work by \citet{baxter2000model} showed that when tasks derive from a common distribution, sharing representations can yield more efficient learning. Later, \citet{maurer2016benefit} introduced complexity-based analyses to quantify these gains, providing a more rigorous understanding of MTL’s benefits. Subsequent research considered the impact of task diversity and structural assumptions. For example, \citet{du2020few}, \citet{tripuraneni2020theory}, and \citet{tripuraneni2020provable} demonstrated how factors such as non-linear structure and heterogeneity shape sample complexity and generalization performance. On the optimization side, \citet{thekumparampil2021statistically} developed an alternating gradient descent algorithm that matches the effectiveness of standard empirical risk minimization while alleviating certain non-convex difficulties.

\paragraph{Applications and Specialized Frameworks}
MTL’s strengths extend across many application domains. In federated learning, it integrates information from distributed datasets while safeguarding data privacy \citep{collins2021exploiting, duchi2022subspace}. Other specialized formulations apply MTL principles to tensor representation meta-learning \citep{deng2022learning}, conditional meta-learning \citep{denevi2020advantage}, and matrix completion \citep{zhou2021multi}. When tasks share similar underlying supports, as in sparse or structured parameter scenarios, MTL improves both data efficiency and predictive accuracy \citep{xu2021multitask, li2023targeting}.

\paragraph{Addressing Heterogeneity and Outliers}
A key challenge in MTL lies in handling heterogeneous tasks and data contamination. Traditional approaches often assume that all tasks adhere to a single representation structure, an assumption easily broken by variations in data distribution \citep{zhang2021survey} or by adversarial contamination \citep{qiao2017learning, qiao2018outliers}. More recent frameworks relax these assumptions, permitting richer task-specific representations that can adapt to diverse structural and environmental conditions.

\paragraph{Distance- and Similarity-Based Frameworks}
Early MTL and transfer learning methods frequently measured task similarity using simple Euclidean or \(\ell_1\)-norm distances, adapting them to a range of settings, including high-dimensional generalized linear models \citep{tian2023transfer}, graphical models \citep{li2022transfer}, semi-supervised classification \citep{zhou2024doubly}, unsupervised Gaussian mixture modeling \citep{tian2022unsupervised, tian2024towards}, and differential privacy \citep{li2024federated}. Such approaches rely on the premise that tasks cluster together in a well-defined metric space \citep{bastani2021predicting, li2022transfer, duan2023adaptive, gu2023commute}. Beyond straightforward distance metrics, angle-based measures refine the notion of similarity. For instance, \citet{gu2022robust} probed alignment by examining angles between parameter vectors. Smaller angles indicate tighter relationships and thus more effective knowledge sharing. Other frameworks, like \citet{tian2023learning}, anchor tasks relative to a central structure, enabling a balance between global coherence and local adaptability.

\paragraph{Manifold-Based Approaches to MTL}
An emerging body of work embeds task parameters on manifolds, using geometric structures to enhance robustness. Traditional manifold-based MTL approaches often treat orthogonality and manifold constraints as static regularizers rather than as integral components of the optimization process \citep{ishibashi2022multi, xiao2019manifold, luo2012manifold, jie2015manifold}. Although these methods acknowledge that parameters may reside on low-dimensional manifolds, improving performance in areas like image classification, disease detection, and face spoofing, they rarely exploit the full power of Riemannian geometry. Recent studies, such as \citet{zheng2023sofari}, have begun imposing orthogonality conditions reminiscent of Stiefel manifold structures. However, these methods emphasize bias correction and parameter estimation, often imposing geometric constraints only as post-hoc adjustments. Our method instead embeds Riemannian geometry into the core of the MTL framework as a structural design. This geometry-aware structure enhances both stability and robustness under heterogeneous or adversarial conditions, as demonstrated throughout our analysis.

\section{Preliminaries on Riemannian Optimization}\label{Preliminaries on Riemannian Optimization}
\subsection{Stiefel Manifold}
The matrices \(\boldsymbol{A}^{(t)}\) appearing in the representation layer satisfy the orthogonality condition \(\boldsymbol{A}^{(t)^\top}\boldsymbol{A}^{(t)} = \boldsymbol{I}_r\), which places them on the Stiefel manifold. Formally, the Stiefel manifold \(\operatorname{St}(p,r)\) is defined as
\[
\operatorname{St}(p, r) \;=\; \{\boldsymbol{A}\in \mathbb{R}^{p\times r} : \boldsymbol{A}^\top \boldsymbol{A}=\boldsymbol{I}_r\},
\]
with \(r \leq p\). It is the set of all \(p\times r\) orthonormal matrices, representing \(r\) orthonormal vectors in \(\mathbb{R}^p\). By working on \(\operatorname{St}(p,r)\), one preserves the orthogonality structure of the representation matrices \(\boldsymbol{A}^{(t)}\) throughout the estimation process, ensuring that each task's representation is both geometrically sound and efficiently structured.

\begin{proposition}[Stiefel manifold structure]\label{prop:StiefelStructure}
Define \(h: \mathbb{R}^{p\times r} \to \operatorname{Sym}(r)\) by \(h(\boldsymbol{A}) = \boldsymbol{A}^\top \boldsymbol{A}-\boldsymbol{I}_r\). Then \(h\) is a defining function for \(\operatorname{St}(p,r)\), making \(\operatorname{St}(p,r)\) an embedded Riemannian submanifold of \(\mathbb{R}^{p\times r}\). Moreover, the dimension of \(\operatorname{St}(p,r)\) is 
\[
\dim(\operatorname{St}(p,r)) \;=\; pr - \tfrac{r(r+1)}{2}.
\]
\end{proposition}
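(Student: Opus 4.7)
The plan is to prove this in the standard way via the constant-rank / submersion theorem for embedded submanifolds: one shows that $0$ is a regular value of $h$, concludes that $\operatorname{St}(p,r)=h^{-1}(0)$ is an embedded submanifold of $\mathbb{R}^{p\times r}$, equips it with the induced Riemannian metric from the Frobenius inner product, and finally reads off the dimension as the codomain-difference $pr - \dim\operatorname{Sym}(r)$.

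First I would verify the domain/codomain of $h$ by noting that $(\boldsymbol{A}^\top\boldsymbol{A}-\boldsymbol{I}_r)^\top=\boldsymbol{A}^\top\boldsymbol{A}-\boldsymbol{I}_r$, so $h$ indeed takes values in $\operatorname{Sym}(r)$, which has dimension $r(r+1)/2$. Then I would compute the Fréchet derivative $Dh(\boldsymbol{A}):\mathbb{R}^{p\times r}\to\operatorname{Sym}(r)$ by expanding $(\boldsymbol{A}+\boldsymbol{V})^\top(\boldsymbol{A}+\boldsymbol{V})-\boldsymbol{I}_r$ and dropping the quadratic term, obtaining
\[
Dh(\boldsymbol{A})[\boldsymbol{V}] = \boldsymbol{A}^\top\boldsymbol{V} + \boldsymbol{V}^\top\boldsymbol{A}.
\]
The key algebraic step is to show that $Dh(\boldsymbol{A})$ is surjective onto $\operatorname{Sym}(r)$ whenever $\boldsymbol{A}\in\operatorname{St}(p,r)$. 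Given any $\boldsymbol{S}\in\operatorname{Sym}(r)$, I would exhibit the explicit preimage $\boldsymbol{V}=\tfrac{1}{2}\boldsymbol{A}\boldsymbol{S}$, whose image under $Dh(\boldsymbol{A})$ is $\tfrac{1}{2}\boldsymbol{A}^\top\boldsymbol{A}\boldsymbol{S}+\tfrac{1}{2}\boldsymbol{S}^\top\boldsymbol{A}^\top\boldsymbol{A} = \tfrac{1}{2}\boldsymbol{S}+\tfrac{1}{2}\boldsymbol{S}=\boldsymbol{S}$, using $\boldsymbol{A}^\top\boldsymbol{A}=\boldsymbol{I}_r$ and the symmetry of $\boldsymbol{S}$. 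This certifies that $0$ is a regular value of $h$.

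With regularity in hand, the submersion (regular level set) theorem immediately yields that $h^{-1}(0)=\operatorname{St}(p,r)$ is a closed embedded smooth submanifold of $\mathbb{R}^{p\times r}$ of dimension
\[
\dim\mathbb{R}^{p\times r} - \dim\operatorname{Sym}(r) = pr - \tfrac{r(r+1)}{2}.
\]
To promote this to a Riemannian submanifold, I would then restrict the ambient Frobenius inner product $\langle\boldsymbol{U},\boldsymbol{V}\rangle=\operatorname{tr}(\boldsymbol{U}^\top\boldsymbol{V})$ to each tangent space $T_{\boldsymbol{A}}\operatorname{St}(p,r)$; smoothness and positive-definiteness are inherited from the ambient metric, completing the embedded Riemannian submanifold claim. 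For internal consistency I would also note the tangent-space description already derived in Section~\ref{Riemannian Gradient Computation via Orthogonal Projection}: $T_{\boldsymbol{A}}\operatorname{St}(p,r)=\ker Dh(\boldsymbol{A})$, which by the computation above equals $\{\boldsymbol{V}:\boldsymbol{A}^\top\boldsymbol{V}\in\operatorname{Skew}(r)\}$, a vector space of dimension $pr-r(r+1)/2$, matching the manifold dimension as a sanity check.

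The only nonroutine step is the surjectivity of $Dh(\boldsymbol{A})$, and even there the explicit section $\boldsymbol{S}\mapsto\tfrac{1}{2}\boldsymbol{A}\boldsymbol{S}$ makes the argument a one-line verification; everything else—the symmetry of the codomain, the derivative formula, and the dimension count—is essentially bookkeeping once the submersion theorem is invoked. I therefore do not anticipate a genuine obstacle, but would take care to be explicit that the regular-value condition is the crucial hypothesis that turns an abstract level set into an embedded Riemannian submanifold, since this is precisely what justifies all the tangent-space and projection computations used throughout Sections~\ref{Geometric Optimization on the Stiefel Manifold}–\ref{Polar Retraction on the Stiefel Manifold}.
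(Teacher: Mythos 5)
Your proposal is correct, and every step checks out: the derivative $Dh(\boldsymbol{A})[\boldsymbol{V}]=\boldsymbol{A}^\top\boldsymbol{V}+\boldsymbol{V}^\top\boldsymbol{A}$ is right, the section $\boldsymbol{S}\mapsto\tfrac{1}{2}\boldsymbol{A}\boldsymbol{S}$ does certify surjectivity onto $\operatorname{Sym}(r)$, and the regular level set theorem then delivers both the embedded submanifold property and the dimension $pr-\tfrac{r(r+1)}{2}$ in one stroke. However, this is a genuinely different route from the paper's. The paper never computes $Dh$ or invokes the submersion theorem; instead it asserts the tangent space $\{\boldsymbol{H}:\boldsymbol{A}^\top\boldsymbol{H}+\boldsymbol{H}^\top\boldsymbol{A}=0\}$ and normal space $\{\boldsymbol{A}\boldsymbol{S}:\boldsymbol{S}\in\operatorname{Sym}(r)\}$, solves for the orthogonal projection of an arbitrary $\boldsymbol{Z}$ onto the tangent space (arriving at $\boldsymbol{S}=\operatorname{sym}(\boldsymbol{A}^\top\boldsymbol{Z})$), takes the consistency of this decomposition as evidence that $\operatorname{St}(p,r)$ is embedded, and gets the dimension by counting the $\tfrac{r(r+1)}{2}$ constraints in $\boldsymbol{A}^\top\boldsymbol{A}=\boldsymbol{I}_r$. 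The two arguments pivot on the same algebra---your explicit preimage $\tfrac{1}{2}\boldsymbol{A}\boldsymbol{S}$ is exactly the statement that the paper's normal space $\{\boldsymbol{A}\boldsymbol{S}\}$ exhausts $\operatorname{Sym}(r)$ under $Dh(\boldsymbol{A})$---but they buy different things: your submersion-theorem argument is the more rigorous and self-contained proof of the proposition as stated (the paper's version assumes the tangent space form it ought to derive, and "consistency of the decomposition" is not by itself a proof of embeddedness), whereas the paper's computation produces as a by-product the projection formula $\mathcal{P}(\boldsymbol{Z})=\boldsymbol{Z}-\boldsymbol{A}\operatorname{sym}(\boldsymbol{A}^\top\boldsymbol{Z})$ of Eq.~\eqref{Orthogonal_Projection}, which is the operational ingredient reused in the Riemannian gradient step of the algorithm. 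If anything, splicing your regular-value argument in front of the paper's projection derivation would give the strongest version: your step legitimizes the tangent/normal space identification that the paper's step then exploits.
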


\begin{proof}
To characterize the tangent and normal spaces of \(\operatorname{St}(p,r)\), consider \(\boldsymbol{A}\in\operatorname{St}(p,r)\) and an arbitrary \(\boldsymbol{Z}\in\mathbb{R}^{p\times r}\). The tangent space \(T_{\boldsymbol{A}}\operatorname{St}(p,r)\) is given by all \(\boldsymbol{H}\) such that \(\boldsymbol{A}^\top \boldsymbol{H} + \boldsymbol{H}^\top \boldsymbol{A}=0\), while the normal space \(N_{\boldsymbol{A}}\operatorname{St}(p,r)\) consists of all matrices of the form \(\boldsymbol{A}\boldsymbol{S}\) with \(\boldsymbol{S}\in\operatorname{Sym}(r)\). To identify the orthogonal projection of \(\boldsymbol{Z}\) onto the tangent space, we impose two conditions: first, \(\boldsymbol{Z}-\mathcal{P}_{\boldsymbol{A}}(\boldsymbol{Z})\in N_{\boldsymbol{A}}\operatorname{St}(p,r)\), so that there exists \(\boldsymbol{S}\in\operatorname{Sym}(r)\) with \(\boldsymbol{Z}-\mathcal{P}_{\boldsymbol{A}}(\boldsymbol{Z})=\boldsymbol{A}\boldsymbol{S}\); second, \(\mathcal{P}_{\boldsymbol{A}}(\boldsymbol{Z})\in T_{\boldsymbol{A}}\operatorname{St}(p,r)\), hence \(\boldsymbol{A}^\top \mathcal{P}_{\boldsymbol{A}}(\boldsymbol{Z})+ \mathcal{P}_{\boldsymbol{A}}(\boldsymbol{Z})^\top \boldsymbol{A}=0\). Substituting \(\mathcal{P}_{\boldsymbol{A}}(\boldsymbol{Z})=\boldsymbol{Z}-\boldsymbol{A}\boldsymbol{S}\) and rearranging these conditions leads to \(\boldsymbol{S}=\operatorname{sym}(\boldsymbol{A}^\top \boldsymbol{Z})\), thereby showing that the tangent space and normal space decomposition is consistent and that \(\operatorname{St}(p,r)\) is a well-defined embedded submanifold. The dimension formula follows from counting constraints imposed by \(\boldsymbol{A}^\top \boldsymbol{A}=\boldsymbol{I}_r\), which removes \(\frac{r(r+1)}{2}\) degrees of freedom from the original \(pr\)-dimensional ambient space.
\end{proof}

\subsection{Skew-Symmetric Matrices}\label{Skew-Symmetric Matrices}

A skew-symmetric matrix \(\boldsymbol{\Omega} \in \mathbb{R}^{r \times r}\) satisfies \(\boldsymbol{\Omega}^\top = -\boldsymbol{\Omega}\). This property forces all diagonal elements to be zero.

The set \(\operatorname{Skew}(r)\) of all \(r \times r\) skew-symmetric matrices forms a real vector space and has dimension \(\tfrac{r(r-1)}{2}\). It is the Lie algebra of the special orthogonal group \(SO(r)\), capturing the directions of infinitesimal rotations.

For example, when \(r=3\), a general element of \(\operatorname{Skew}(3)\) is:
\[
\boldsymbol{\Omega} = \begin{pmatrix}
0 & -\alpha & -\beta \\
\alpha & 0 & -\gamma \\
\beta & \gamma & 0
\end{pmatrix}
\]
with \(\alpha, \beta, \gamma \in \mathbb{R}\). This structure characterizes the tangent directions associated with orthogonal constraints, as seen in optimization problems on the Stiefel manifold.

\subsection{Riemannian Gradient and Metric Equivalence on the Stiefel Manifold}\label{Riemannian Gradient and Metric Equivalence on the Stiefel Manifold}

On the Stiefel manifold  \(\operatorname{St}(p, r)\), the Riemannian gradient indicates how to descend the objective while remaining on the manifold. We obtain it by projecting the Euclidean gradient onto the tangent space. 


Let \( f \colon \operatorname{St}(p, r) \to \mathbb{R} \) be a smooth function. Because \( f \) is smooth, we can extend it to a neighborhood of \(\operatorname{St}(p, r)\) in \(\mathbb{R}^{p \times r}\), obtaining \(\bar{f} \colon \mathbb{R}^{p \times r} \to \mathbb{R}\). For a point \( x \in \operatorname{St}(p, r) \), the Riemannian gradient \( \tilde{\nabla} f(x) \) at \( x \) must satisfy  
\( \forall v \in T_x \operatorname{St}(p, r), \quad \langle v, \tilde{\nabla} f(x) \rangle_x = Df(x)[v] = \langle v, \nabla \bar{f}(x) \rangle \),  
where \( \langle \cdot,\cdot \rangle_x \) is the Riemannian metric at \( x \), and \( Df(x)[v] \) is the directional derivative of \( f \) at \( x \) along \( v \).

Since \( T_x \operatorname{St}(p, r) \) is a subspace of \( \mathbb{R}^{p \times r} \), we can break down the Euclidean gradient \( \nabla \bar{f}(x) \) into parts parallel and perpendicular to the tangent space:  
\( \nabla \bar{f}(x) = \nabla \bar{f}(x)_\parallel + \nabla \bar{f}(x)_\perp \),  
where \( \nabla \bar{f}(x)_\parallel \in T_x \operatorname{St}(p, r) \) and \( \nabla \bar{f}(x)_\perp \in N_x \operatorname{St}(p, r) \), the normal space at \( x \). Because the normal component does not contribute to directional derivatives along the manifold, we have  
\( \langle v, \tilde{\nabla} f(x) \rangle_x = \langle v, \nabla \bar{f}(x)_\parallel \rangle \).

On \( \operatorname{St}(p, r) \), the Riemannian metric comes directly from the usual inner product in \( \mathbb{R}^{p \times r} \). As a result, projecting the Euclidean gradient onto the tangent space \( T_x \operatorname{St}(p, r) \) is straightforward:
\begin{equation}
\mathcal{P}_{T_x \operatorname{St}(p, r)}(\nabla \bar{f}(x)) = \nabla \bar{f}(x) - N_x \operatorname{St}(p, r).
\label{eq:projection_tangent_space}
\end{equation}

\begin{remark}
For a general Riemannian manifold \( \mathcal{M} \), the Riemannian gradient \( \tilde{\nabla} f(x) \) depends on the chosen metric and may differ greatly from the simple projection of \( \nabla \bar{f}(x) \). In our case, where \( \mathcal{M}=\operatorname{St}(p,r) \) is a submanifold of \( \mathbb{R}^{p \times r} \) with the standard Euclidean metric, the Riemannian gradient is just:
\(
\tilde{\nabla} f(x) = \mathcal{P}_{T_x \operatorname{St}(p, r)}(\nabla \bar{f}(x)).
\)
This convenient relationship simplifies computing Riemannian gradients on the Stiefel manifold, making geometry-based optimization methods more accessible.
\end{remark}

\subsection{Retraction}

\subsubsection{Connection to SVD and Gram Matrix}
The polar retraction can be understood through the singular value decomposition (SVD). Suppose  
\( \boldsymbol{A}^{(t)} + \boldsymbol{H} = \boldsymbol{U} \boldsymbol{\Sigma} \boldsymbol{W}^\top \),  
with \( \boldsymbol{U} \in \mathbb{R}^{p \times r} \), \( \boldsymbol{\Sigma} \in \mathbb{R}^{r \times r} \), and \( \boldsymbol{W} \in \mathbb{R}^{r \times r} \). Substituting into the polar retraction definition:  
\( (\boldsymbol{A}^{(t)} + \boldsymbol{H})^\top(\boldsymbol{A}^{(t)} + \boldsymbol{H}) = \boldsymbol{I}_r + \boldsymbol{H}^\top \boldsymbol{H} = \boldsymbol{W}\boldsymbol{\Sigma}^2\boldsymbol{W}^\top \).  
Thus,  
\( (\boldsymbol{I}_r + \boldsymbol{H}^\top \boldsymbol{H})^{-1/2} = \boldsymbol{W}\boldsymbol{\Sigma}^{-1}\boldsymbol{W}^\top \).  
Putting it all together, we get  
\( \mathcal{R}_{\boldsymbol{A}^{(t)}}(\boldsymbol{H}) = \boldsymbol{U}\boldsymbol{W}^\top \). This operation finds the closest orthonormal matrix to \( \boldsymbol{A}^{(t)} + \boldsymbol{H} \).

\begin{remark}
    The polar retraction is essentially picking out the orthonormal factor \(\boldsymbol{U}\boldsymbol{W}^\top\). By doing so, it naturally returns the updated matrix to the Stiefel manifold in a stable way.
\end{remark}

\begin{remark}
    Viewing the polar retraction through the SVD shows how straightforward it is to implement. Turning \(\boldsymbol{A}^{(t)} + \boldsymbol{H}\) into \(\boldsymbol{U}\boldsymbol{W}^\top\) directly enforces orthonormality, making the method robust and practical.
\end{remark}

Proposition \ref{well-defined retraction} confirms that \(\mathcal{R}_{\boldsymbol{A}^{(t)}}\) is a well-defined smooth retraction. Each update step returns smoothly to the manifold. Proposition \ref{Uniqueness of polar retraction} confirms that the retraction \(\mathcal{R}_{\boldsymbol{A}^{(t)}}\) is unique.

\begin{proposition}[Well-defined retraction]\label{well-defined retraction}
Let \(\mathcal{M} = \operatorname{St}(p,r)\). Define \(\mathcal{R}: T\mathcal{M} \to \mathcal{M}\) by
\[
\mathcal{R}_{\boldsymbol{A}}(\boldsymbol{H}) \;=\; (\boldsymbol{A} + \boldsymbol{H})(\boldsymbol{I}_r + \boldsymbol{H}^\top \boldsymbol{H})^{-1/2}.
\]
Then \(\mathcal{R}\) is a smooth and well-defined retraction on \(\mathcal{M}\).\\

\begin{proof}
By construction, \(\mathcal{R}_{\boldsymbol{A}}(\boldsymbol{0}) = \boldsymbol{A}\), ensuring the first property of a retraction. To verify that \(\mathcal{R}_{\boldsymbol{A}}(\boldsymbol{H}) \in \mathcal{M}\) for all \(\boldsymbol{H} \in T_{\boldsymbol{A}}\mathcal{M}\), observe that
\[
(\mathcal{R}_{\boldsymbol{A}}(\boldsymbol{H}))^\top \mathcal{R}_{\boldsymbol{A}}(\boldsymbol{H}) 
= (\boldsymbol{I}_r + \boldsymbol{H}^\top\boldsymbol{H})^{-1/2}(\boldsymbol{I}_r + \boldsymbol{H}^\top\boldsymbol{H})(\boldsymbol{I}_r + \boldsymbol{H}^\top\boldsymbol{H})^{-1/2}
= \boldsymbol{I}_r.
\]
Thus, \(\mathcal{R}_{\boldsymbol{A}}(\boldsymbol{H}) \in \operatorname{St}(p,r)\).

Next, consider smoothness. The map \((\boldsymbol{A}, \boldsymbol{H}) \mapsto (\boldsymbol{A}+\boldsymbol{H})(\boldsymbol{I}_r+\boldsymbol{H}^\top\boldsymbol{H})^{-1/2}\) is a composition of smooth functions: matrix addition, inversion, and the matrix inverse square root of a positive-definite matrix are smooth operations on their respective domains. Hence \(\mathcal{R}\) is smooth.

To show that the differential at \(\boldsymbol{H}=\boldsymbol{0}\) is the identity on \(T_{\boldsymbol{A}}\mathcal{M}\), we write
\[
\frac{d}{dt}\mathcal{R}_{\boldsymbol{A}}(t\boldsymbol{H})\bigg|_{t=0} 
= \frac{d}{dt}\big[(\boldsymbol{A}+t\boldsymbol{H})(\boldsymbol{I}_r + t^2\boldsymbol{H}^\top\boldsymbol{H})^{-1/2}\big]\bigg|_{t=0}.
\]

Set \(\gamma(t) = \boldsymbol{I}_r + t^2\boldsymbol{H}^\top\boldsymbol{H}\). We then consider the function \(h: P_r \to P_r\), defined by \(\boldsymbol{B} \mapsto \boldsymbol{B}^{-1/2}\), where \(P_r\) is the set of positive-definite \(r\times r\) matrices. Applying the chain rule:
\[
\frac{d}{dt}h(\gamma(t))\bigg|_{t=0} = D h(\boldsymbol{I}_r)[2t\boldsymbol{H}^\top\boldsymbol{H}]_{t=0} = 0,
\]
since the factor \(2t\) vanishes at \(t=0\).

Hence
\[
\frac{d}{dt}\mathcal{R}_{\boldsymbol{A}}(t\boldsymbol{H})\bigg|_{t=0} = \boldsymbol{H} + \boldsymbol{A}\cdot 0 = \boldsymbol{H}.
\]

This shows that \(\mathcal{R}\) is indeed a retraction: it maps \(\boldsymbol{0}\) back to \(\boldsymbol{A}\) and its differential at \(\boldsymbol{0}\) is the identity on \(T_{\boldsymbol{A}}\mathcal{M}\). Thus, \(\mathcal{R}\) is a smooth and well-defined retraction on \(\operatorname{St}(p,r)\).\\ 
\end{proof}
\end{proposition}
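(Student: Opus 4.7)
The plan is to verify the three defining conditions of a retraction in turn: (i) $\mathcal{R}_{\boldsymbol{A}}(\boldsymbol{H}) \in \mathcal{M}$ for every tangent vector $\boldsymbol{H}$, (ii) $\mathcal{R}$ is smooth on the tangent bundle, and (iii) $\mathcal{R}_{\boldsymbol{A}}(\boldsymbol{0}) = \boldsymbol{A}$ with $D\mathcal{R}_{\boldsymbol{A}}(\boldsymbol{0})$ equal to the identity on $T_{\boldsymbol{A}}\mathcal{M}$.

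For (i), I would compute $(\mathcal{R}_{\boldsymbol{A}}(\boldsymbol{H}))^\top \mathcal{R}_{\boldsymbol{A}}(\boldsymbol{H})$ directly. Expanding $(\boldsymbol{A}+\boldsymbol{H})^\top(\boldsymbol{A}+\boldsymbol{H}) = \boldsymbol{I}_r + \boldsymbol{A}^\top\boldsymbol{H} + \boldsymbol{H}^\top\boldsymbol{A} + \boldsymbol{H}^\top\boldsymbol{H}$ and invoking the tangent space condition $\boldsymbol{A}^\top\boldsymbol{H} + \boldsymbol{H}^\top\boldsymbol{A} = 0$ from Proposition~\ref{prop:StiefelStructure}, the cross terms cancel and the expression collapses to $\boldsymbol{I}_r + \boldsymbol{H}^\top\boldsymbol{H}$. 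Sandwiching between the two $(\boldsymbol{I}_r + \boldsymbol{H}^\top\boldsymbol{H})^{-1/2}$ factors then yields $\boldsymbol{I}_r$, and this is precisely the step where tangency of $\boldsymbol{H}$ is essential. For (ii), since $\boldsymbol{H}^\top\boldsymbol{H} \succeq 0$ the matrix $\boldsymbol{I}_r + \boldsymbol{H}^\top\boldsymbol{H}$ is strictly positive definite for every $\boldsymbol{H}$, and $\boldsymbol{B} \mapsto \boldsymbol{B}^{-1/2}$ is smooth on the open cone of positive definite matrices via the analytic functional calculus; smoothness of $\mathcal{R}$ then follows by composition with the polynomial maps $(\boldsymbol{A}, \boldsymbol{H}) \mapsto \boldsymbol{A}+\boldsymbol{H}$ and $(\boldsymbol{A},\boldsymbol{H}) \mapsto \boldsymbol{I}_r + \boldsymbol{H}^\top\boldsymbol{H}$ together with matrix multiplication.

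For (iii), setting $\boldsymbol{H} = \boldsymbol{0}$ gives $\mathcal{R}_{\boldsymbol{A}}(\boldsymbol{0}) = \boldsymbol{A}\cdot\boldsymbol{I}_r = \boldsymbol{A}$ immediately. To identify the differential at $\boldsymbol{0}$, I would differentiate the curve $t \mapsto (\boldsymbol{A} + t\boldsymbol{H})(\boldsymbol{I}_r + t^2\boldsymbol{H}^\top\boldsymbol{H})^{-1/2}$ at $t=0$ via the product rule. The key observation is that the Gram factor depends on $t^2$, so $\frac{d}{dt}(\boldsymbol{I}_r + t^2\boldsymbol{H}^\top\boldsymbol{H})\big|_{t=0} = \boldsymbol{0}$, which by the chain rule forces the derivative of $(\boldsymbol{I}_r + t^2\boldsymbol{H}^\top\boldsymbol{H})^{-1/2}$ at $t=0$ to vanish regardless of the precise Fréchet derivative of $\boldsymbol{B}\mapsto\boldsymbol{B}^{-1/2}$ at $\boldsymbol{I}_r$. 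Only the linear factor $(\boldsymbol{A} + t\boldsymbol{H})$ contributes, yielding $D\mathcal{R}_{\boldsymbol{A}}(\boldsymbol{0})[\boldsymbol{H}] = \boldsymbol{H}$.

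The main hazard is step (iii), because the Fréchet derivative of the matrix inverse square root at $\boldsymbol{I}_r$ is awkward to handle head-on. The plan above sidesteps it entirely: the quadratic $t^2$ dependence annihilates the problematic term at the origin, so local rigidity drops out of a one-line product rule computation. The remaining steps present no real difficulty, since well-definedness is a short algebraic identity depending exactly on the tangency of $\boldsymbol{H}$, and smoothness reduces to standard facts about matrix functional calculus.
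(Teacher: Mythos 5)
Your proposal is correct and takes essentially the same route as the paper's proof: the same algebraic verification of orthonormality, the same composition-of-smooth-maps argument for smoothness, and the same key trick for local rigidity, namely that the $t^2$ dependence of the Gram factor makes its derivative vanish at $t=0$ so that only the linear factor contributes. If anything, you are more explicit than the paper on one point: the identity $(\boldsymbol{A}+\boldsymbol{H})^\top(\boldsymbol{A}+\boldsymbol{H}) = \boldsymbol{I}_r + \boldsymbol{H}^\top\boldsymbol{H}$ relies on the tangency condition $\boldsymbol{A}^\top\boldsymbol{H} + \boldsymbol{H}^\top\boldsymbol{A} = \boldsymbol{0}$, which the paper's displayed computation uses implicitly.
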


\begin{proposition}[Uniqueness of the polar retraction]\label{Uniqueness of polar retraction}
Let \(\mathcal{M} = \operatorname{St}(p,r)\). Consider \((\boldsymbol{A},\boldsymbol{H}) \in T\mathcal{M}\) and form the thin singular value decomposition \((\boldsymbol{A} + \boldsymbol{H}) = \boldsymbol{U}\boldsymbol{\Sigma}\boldsymbol{W}^\top\), where \(\boldsymbol{U} \in \mathcal{M}\), \(\boldsymbol{W} \in O(r)\), and \(\boldsymbol{\Sigma}\) is diagonal with strictly positive entries. Then the matrix \(\boldsymbol{U}\boldsymbol{W}^\top\) is the unique metric projection of \(\boldsymbol{A}+\boldsymbol{H}\) onto \(\mathcal{M}\). In particular, the polar retraction defined by \(\mathcal{R}_{\boldsymbol{A}}(\boldsymbol{H}) = \boldsymbol{U}\boldsymbol{W}^\top\) is unique.\\

\begin{proof}
Fix \(\boldsymbol{A}\in \mathcal{M}\) and \(\boldsymbol{H}\in T_{\boldsymbol{A}}\mathcal{M}\). By definition, we have \(\boldsymbol{A}+\boldsymbol{H} = \boldsymbol{U}\boldsymbol{\Sigma}\boldsymbol{W}^\top\). We aim to find
\[
\min_{\boldsymbol{Y}\in\mathcal{M}}\|\boldsymbol{A}+\boldsymbol{H}-\boldsymbol{Y}\|_F^2.
\]
Since the Frobenius norm is unitarily invariant and \(\boldsymbol{W}\in \mathcal{O}(r)\), the transformation \(\boldsymbol{Y} \mapsto \boldsymbol{Y}\boldsymbol{W}\) is bijective on \(\mathcal{M}\). Let \(\boldsymbol{Z} = \boldsymbol{Y}\boldsymbol{W}\). Then
\[
\inf_{\boldsymbol{Y}\in\mathcal{M}}\|\boldsymbol{A}+\boldsymbol{H}-\boldsymbol{Y}\|_F^2
= \inf_{\boldsymbol{Z}\in\mathcal{M}}\|\boldsymbol{U}\boldsymbol{\Sigma}-\boldsymbol{Z}\|_F^2.
\]

Write \(\boldsymbol{U}\boldsymbol{\Sigma} = [\sigma_1\boldsymbol{u}_1 \;\; \sigma_2\boldsymbol{u}_2 \;\; \cdots \;\; \sigma_r\boldsymbol{u}_r]\), where \(\{\boldsymbol{u}_i\}_{i=1}^r\) are orthonormal vectors in \(\mathbb{R}^{p}\) and \(\sigma_i>0\). Any \(\boldsymbol{Z}\in\mathcal{M}\) has orthonormal columns \(\{\boldsymbol{z}_i\}_{i=1}^r\). Thus
\[
\|\boldsymbol{U}\boldsymbol{\Sigma}-\boldsymbol{Z}\|_F^2 = \sum_{i=1}^r \|\sigma_i \boldsymbol{u}_i - \boldsymbol{z}_i\|_2^2 
= \sum_{i=1}^r (\sigma_i^2 - 2\sigma_i\langle \boldsymbol{u}_i,\boldsymbol{z}_i\rangle + 1).
\]

By the Cauchy–Schwarz inequality, \(\langle \boldsymbol{u}_i,\boldsymbol{z}_i\rangle \leq 1\), with equality if and only if \(\boldsymbol{z}_i=\boldsymbol{u}_i\). Since each \(\sigma_i>0\), minimizing the quadratic expression in \(\boldsymbol{z}_i\) forces \(\langle \boldsymbol{u}_i,\boldsymbol{z}_i\rangle=1\). Hence \(\boldsymbol{z}_i=\boldsymbol{u}_i\) for all \(i\), and we obtain the unique minimizer \(\boldsymbol{Z}=\boldsymbol{U}\). Retracing the substitution \(\boldsymbol{Z}=\boldsymbol{Y}\boldsymbol{W}\) gives \(\boldsymbol{Y}=\boldsymbol{U}\boldsymbol{W}^\top\).

This establishes that \(\boldsymbol{U}\boldsymbol{W}^\top\) is the unique solution to the projection problem and thus the polar retraction \(\mathcal{R}_{\boldsymbol{A}}(\boldsymbol{H}) = \boldsymbol{U}\boldsymbol{W}^\top\) is uniquely defined.\\
\end{proof}
\end{proposition}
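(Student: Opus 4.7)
The plan is to recognize the claim as a Procrustes-type projection problem: show that $\boldsymbol{U}\boldsymbol{W}^\top$ is the unique Frobenius-norm closest point in $\mathcal{M}$ to $\boldsymbol{A}+\boldsymbol{H}$, and then conclude that the polar retraction formula, being exactly this projector (as already established via the SVD connection in the main text), must be uniquely defined. Because $\boldsymbol{H}\in T_{\boldsymbol{A}}\mathcal{M}$ implies $\boldsymbol{A}^\top\boldsymbol{H}+\boldsymbol{H}^\top\boldsymbol{A}=0$, the Gram matrix $(\boldsymbol{A}+\boldsymbol{H})^\top(\boldsymbol{A}+\boldsymbol{H}) = \boldsymbol{I}_r + \boldsymbol{H}^\top\boldsymbol{H}$ is strictly positive definite, so all singular values $\sigma_i$ in the thin SVD satisfy $\sigma_i>0$. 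This strict positivity is what I expect to drive uniqueness.

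I would proceed as follows. First, set up the optimization $\min_{\boldsymbol{Y}\in\mathcal{M}}\|\boldsymbol{A}+\boldsymbol{H}-\boldsymbol{Y}\|_F^2$. Second, exploit unitary invariance of the Frobenius norm to simplify: since $\boldsymbol{W}\in \mathcal{O}(r)$ acts bijectively on $\mathcal{M}$ from the right (right-multiplication by an orthogonal matrix preserves the Stiefel constraint $\boldsymbol{Z}^\top\boldsymbol{Z}=\boldsymbol{I}_r$), introduce the change of variable $\boldsymbol{Z}=\boldsymbol{Y}\boldsymbol{W}$ and reduce the problem to $\min_{\boldsymbol{Z}\in\mathcal{M}}\|\boldsymbol{U}\boldsymbol{\Sigma}-\boldsymbol{Z}\|_F^2$. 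Third, expand column by column: writing $\boldsymbol{U}=[\boldsymbol{u}_1,\ldots,\boldsymbol{u}_r]$ and $\boldsymbol{Z}=[\boldsymbol{z}_1,\ldots,\boldsymbol{z}_r]$ with $\{\boldsymbol{z}_i\}$ orthonormal, the objective decouples into $\sum_{i=1}^r (\sigma_i^2 - 2\sigma_i\langle \boldsymbol{u}_i,\boldsymbol{z}_i\rangle + 1)$.

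The main obstacle, and the step deserving the most care, is the argument for \emph{uniqueness}, not just optimality. By Cauchy--Schwarz with unit-norm $\boldsymbol{z}_i$ and $\boldsymbol{u}_i$, each inner product satisfies $\langle \boldsymbol{u}_i,\boldsymbol{z}_i\rangle\le 1$, with equality if and only if $\boldsymbol{z}_i=\boldsymbol{u}_i$. Here strict positivity $\sigma_i>0$ is essential: it makes the coefficient on the inner product nonzero, so equality in Cauchy--Schwarz is forced at every coordinate. If any $\sigma_i$ were zero, that column of $\boldsymbol{Z}$ would be free (only required to complete an orthonormal system), breaking uniqueness; hence the tangency of $\boldsymbol{H}$, which guarantees $\sigma_i>0$, is doing the real work.

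Finally, I would reverse the substitution to recover $\boldsymbol{Y}=\boldsymbol{U}\boldsymbol{W}^\top$ as the unique minimizer, and note that this matches the closed form $(\boldsymbol{A}+\boldsymbol{H})(\boldsymbol{I}_r+\boldsymbol{H}^\top\boldsymbol{H})^{-1/2}$ via the SVD identity $(\boldsymbol{I}_r+\boldsymbol{H}^\top\boldsymbol{H})^{-1/2}=\boldsymbol{W}\boldsymbol{\Sigma}^{-1}\boldsymbol{W}^\top$ derived in the preceding subsection. This identifies the polar retraction with the metric projection, and uniqueness of the projection then gives uniqueness of the retraction formula.
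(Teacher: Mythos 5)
Your proposal is correct and follows essentially the same route as the paper's own proof: the same change of variables $\boldsymbol{Z}=\boldsymbol{Y}\boldsymbol{W}$ via unitary invariance, the same column-wise expansion, and the same Cauchy--Schwarz argument with $\sigma_i>0$ forcing $\boldsymbol{z}_i=\boldsymbol{u}_i$. Your added justification that tangency of $\boldsymbol{H}$ guarantees $(\boldsymbol{A}+\boldsymbol{H})^\top(\boldsymbol{A}+\boldsymbol{H})=\boldsymbol{I}_r+\boldsymbol{H}^\top\boldsymbol{H}\succ 0$, hence $\sigma_i>0$, is a nice touch the paper leaves implicit, but it does not change the substance of the argument.
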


\end{document}